\def\C{C_{GK}(x_1;x_2)}%
\def\Cs{\Tilde{C}_{GK}(x_1;x_2)}%
\let\biconditional\leftrightarrow
\newcommand{\x}{{x}}
\newcommand{\z}{{z}}
\newcommand{\tb}{{t}}
\newcommand{\codelink}{\url{https://github.com/mjkleinman/common-vae}}
\newcommand{\E}{\mathbb{E}}
\def\qphi{q_{\phi}(\z|\x)}
\def\qiphi{q_{\phi_i}(\z_c|\x_i)}
\def\ptheta{p_{\theta}(\x | \z)}
\newcommand{\R}{\mathbb{R}}
\newcommand{\note}[1]{{\color{red} []}}
\theoremstyle{definition}
\newtheorem{theorem}{Theorem}[section]
\newtheorem{proposition}{Proposition}[section]
\begin{document}

\title{G\'acs-K\"orner Common Information Variational Autoencoder}
\author{Michael Kleinman$^1$\ \  Alessandro Achille$^{2,3}\thanks{Work performed as external collaboration not related to Amazon}$ \ \ Stefano Soatto$^{1}$\ \  Jonathan C. Kao$^1$ \\
$^1$University of California, Los Angeles \ \  $^2$AWS AI Labs \ \ $^3$Caltech \\
\texttt{michael.kleinman@ucla.edu} \quad \texttt{aachille@amazon.com} \\
\texttt{soatto@cs.ucla.edu} \quad  
\texttt{kao@seas.ucla.edu} \\
}
\maketitle

\begin{abstract}
We propose a notion of common information that allows one to quantify and separate the information that is shared between two random variables from the information that is unique to each. 
Our notion of common information is defined by an optimization problem over a family of functions and recovers the G\'acs-K\"orner common information as a special case. Importantly, our notion can be approximated empirically using samples from the underlying data distribution. 
We then provide a method to partition and quantify the common and unique information using a simple modification of a traditional variational auto-encoder. Empirically, we demonstrate that our formulation allows us to learn semantically meaningful common and unique factors of variation even on high-dimensional data such as images and videos. Moreover, on datasets where ground-truth latent factors are known, we show that we can accurately quantify the common information between the random variables.\footnote{Code available at: \codelink}
\end{abstract}

\section{Introduction}

Data coming from different sensors often capture information related to common latent factors. For example, many animals have two eyes that capture different but highly-correlated views of the same objects in the scene. Similarly, sensors of different modalities, such as eyes and ears, capture correlated information about the underlying scene, as do videos and other time series, where the sensors are separated in time rather than in modality.
Learning how information of one sensor maps to information of another provides a self-supervised signal to disentangle the variability that is intrinsic in a sensor from the latent causes (e.g., objects) that are shared between multiple sensors. Indeed, there is evidence that infants spend a long time during development purposefully experiencing objects through different senses at the same time \cite{smith2005development}.

Motivated by this, we propose to learn meaningful representations of multi-view data by quantifying and exploiting such correlations in an unsupervised fashion, by using an information theoretic notion of \emph{common information} as the guiding signal to disentangle common shared information present in high dimensional sensory data (Fig.~\ref{fig:schematic_full}).

\begin{figure*}[t!]
    \centering
    \includegraphics[width=0.5\textwidth]{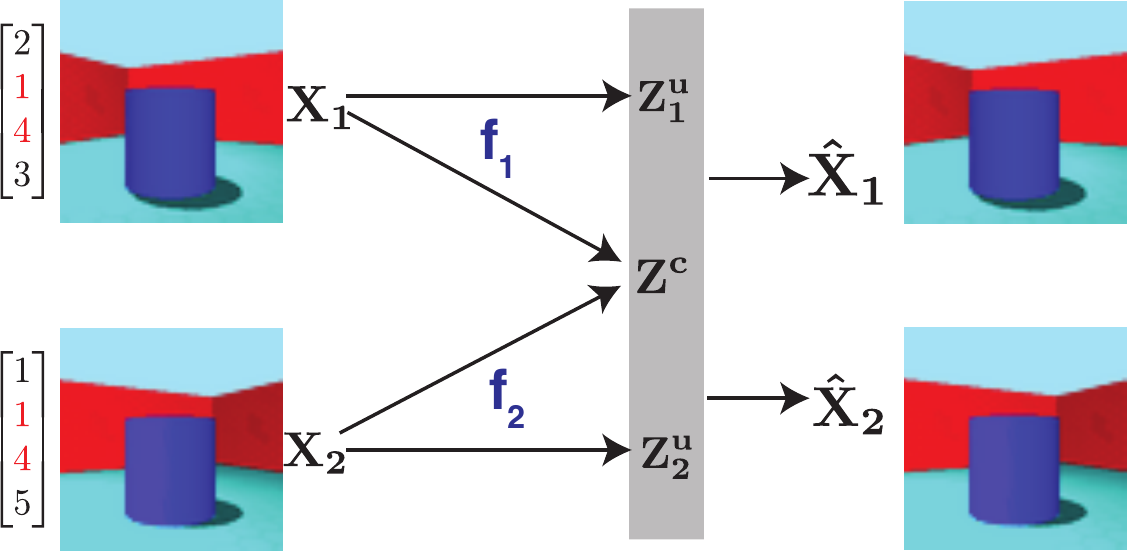}
    \caption{\textbf{High level schematic of approach.} Red denotes shared latent factors (size, shape, floor, background and object color) and black denotes unique latent (viewpoint). The aim is to extract $\z_c$, which is a random variable that is a function of both inputs $\x_i$. We also allow for unique latent variables $\z_u$ to capture information that unique to each view. The latent representations are used to reconstruct the inputs.}
    \label{fig:schematic_full}
\end{figure*}

However, defining a notion of common information is itself not trivial.
The most natural and typical way to quantify the ``common part'' between random variables would be by quantifying their mutual information.
But mutual information has no clear interpretation in terms of a decomposition of random variables in unique and common components. In particular, \cite{gacs1973common} note that there is generally no way to write two variables $X$ and $Y$ using a three part code $(A, B, C)$ such that $X = f(A, C)$, $Y = g(B, C)$ and where $C$ encodes all and only the mutual information $I(X;Y)$. Discovering the largest common factor $C$, which encodes what is known as the G\'acs-K\"orner common information, from high dimensional data is then a distinct problem on its own \cite{wolf2004zero, salamatian2020approximate}.

To the best of our knowledge, there are currently no approaches to compute or approximate the G\'acs-K\"orner common information from high-dimensional samples \cite{salamatian2020approximate}. 
In this work, we seek to learn common representations that satisfy the constraint that they are (approximately) a function of each input. A contribution in this paper is that we generalize the constraint that the representation needs to be a deterministic function and allow it to be a stochastic map. As we later show, this is helpful for quantifying and interpreting the latent representation, and allows us to parameterize the optimization with deep neural networks. 

We show that our objective can be optimized using a multi-view Variational Auto-Encoder (VAE).
Since in general each view can contain individual factors of variation that are not shared between the views, we augment our model with a set of unique latent variables that can capture unexplained latent factors of variation, and show that the common and unique component can be efficiently inferred from data through standard training. While training the multi-view VAE, we simultaneously develop a scalable approximation for the G\'acs-K\"orner common information, as we describe in Section~\ref{sec:method}.

To empirically evaluate the ability to separate the common and unique latent factors we introduce two new datasets, which extend commonly used datasets for evaluating disentangled representations learning: dSprites \cite{dsprites17} and 3dShapes \cite{3dshapes18}. For each dataset, we generate a set of paired views $(\x_1$, $\x_2)$ such that they share a set of common factors. This allows us to quantitatively evaluate the ability to separate the common and unique factors. We also compare our method to multi-view contrastive learning \cite{tian2020contrastive} and show that thanks to our definition we avoid learning degenerate representations when the views share little information. 

We hypothesize that a key reason precluding the identification of latent generating factors from observed data is that receiving a single sample of a scene is quite limiting. Indeed, classical neuroscience experiments has shown that the ability to interact with an environment, as opposed to passively observing sensory inputs, is critical for learning meaningful representations of the environment \cite{held1963movement}. %

\section{Preliminaries and Related Work}

The entropy $H(x)$ of a random variable $x$ is $\E_{p(x)} [\log \frac{1}{p(x)}]$. The mutual information $I(x;z) = H(z) - H(z|x)$. Another useful identity for mutual information that we use is $I(x;z) = \E_{p(x)} \left[KL(p(z|x)||p(z))\right]$ where KL denotes the Kullback-Leibler divergence.

\paragraph{G\'acs-K\"orner Common Information.}
The G\'acs-K\"orner common information \cite{gacs1973common} is defined as 
\begin{equation}
\C := \max_z H(z) \quad \text{s.t. }\; z = f(x_1) = g(x_2),
\label{eq:def}
\end{equation}
where $f$ and $g$ are \emph{deterministic} functions. The G\'acs-K\"orner common information is thus defined through a random variable $z$ that is a deterministic function of both inputs $x_1$ and $x_2$. Among all such random variables, $z$ is the random variable with maximum entropy. This has also been referred to as the ``zero error information'' in applications to cryptography \cite{wolf2004zero}. It is an attempt to formalize and operationalize the idea of the common part between sources, which mutual information lacks. It is also a lower bound to the mutual information \cite{gacs1973common, wolf2004zero}.  To the best of our knowledge, there are no efficient techniques for computing the GK common information for high-dimensional $x_1, x_2$. We elaborate on the difference between GK common information and mutual information in App.~\ref{app:diff-gk-mi}.

\paragraph{Variational Autoencoders}
Variational Autoencdoers (VAEs) \cite{kingma2014autoencoding} are latent variable generative models that are trained to maximize the likelihood of the data by maximizing the \textit{evidence lower bound}, or minimizing the loss:
\begin{equation}
    \mathcal{L}_\text{VAE} = \E_{p(\x)} [\ \E_{\qphi} [- \log\ptheta] + KL(q_{\phi}(\z|\x)\ ||\ p(\z)) ]. 
\end{equation}
The VAE loss can be motivated in an information theoretic manner as optimizing an Information Bottleneck \cite{tishby99information}, where the reconstruction term encourages a \emph{sufficient} representation and the KL regularization term encourages a \emph{minimal} representation \cite{achille2018emergence, alemi2018fixing}. 
The addition of a parameter $\beta$ to modify how the 
KL regularization is penalized leads to the following loss (and corresponds to the traditional VAE loss when $\beta = 1$):
\begin{equation}
    \mathcal{L}_{\beta\text{-VAE}} = \E_{p(\x)} [\ \E_{\qphi} [- \log\ptheta] + \beta KL(q_{\phi}(\z|\x)\ ||\ p(\z)) ]. 
\end{equation}
For larger values of $\beta$ the representations become more disentangled, shown analytically in  \cite{achille2018information} and empirically in \cite{higgins2017beta}, although reconstructions become worse. 

\paragraph{Disentangled representations}
\label{sec:disentanglement}
A guiding assumption for representation learning is that the observed data $\x$ (i.e an image) can be generated from a (simpler) set of latent generating factors $\z$. 
Assuming the latent factors are independent, the idea of learning \emph{disentangled} representations involves learning these latent factors of variation in an unsupervised manner \cite{bengio2013representation}. 
However, despite apparent empirical progress in learning disentangled representations \cite{higgins2017beta, burgess2018understanding, chen2018isolating}, there remains inherent issues in both learning and defining disentangled representations \cite{locatello2019challenging}. In many cases, different independent latent factors may lead to equivalent observed data, and without an inductive bias, disentanglement remains ill-defined. For example, color can be decomposed into an RGB decomposition, or an equivalent HSV decomposition.

In \cite[Theorem 1]{locatello2019challenging} it is shown that without any inductive bias, one cannot uniquely identify the underlying independent latent factors in a purely unsupervised manner from observed data. 
Empirically, they also found that there was no clear correlation between training statistics and disentanglement scores without supervision.
Later, and related to our work, the authors examined the setting where there is paired data and no explicit supervision (weak supervision), and found that such a setup was helpful for learning disentangled representations \cite{locatello2020weakly}.
The authors examined the setting in which the set of shared latent factors changed for each example, which was necessary for their identifiability proof. This also required using the same encoder for each view, and thus is a restricted setting that does not easily scale to multi-modal data.

Here, we study the scenario where the set of generating factors is the same across examples, as in the case of a pair of fixed sensors receiving correlated data. 
Additionally, our objective is motivated in an information theoretic way and our method generalizes to the case where we have different sensory modalities, which is relevant to neuroscience and multi-modal learning. Finally, our variational objective is flexible and allows estimation of the \emph{common information} in a principled way.

\paragraph{Approximating Mutual Information}

Estimating mutual information from samples is challenging for high-dimensional random vectors \cite{paninski2003samples}. 
The primary difficulty in estimating mutual information is constructing high-dimensional probability distribution from samples, as the number of samples required scales exponentially with dimensionality. 
This is impractical for realistic deep learning tasks where the representations are high dimensional. 
To estimate mutual information, \cite{shwartz17opening} used a binning approach, discretizing the activations into a finite number of bins. 
While this approximation is exact in the limit of infinitesimally small bins, in practice, the size of the bin affects the estimator \cite{saxe2018information, goldfeld2019estimating}. 
In contrast to binning, other approaches to estimate mutual information include entropic-based estimators (e.g., \cite{goldfeld2019estimating}) and a nearest neighbours approach \cite{kraskov2004estimating}.
Although mutual information is difficult to estimate, it is an appealing quantity to summarily characterize neural network behavior because of its invariance to smooth and invertible transformations. 
In this work, rather than estimate the mutual information directly, we study the ``usable information'' in the network \cite{Xu2020theory, kleinman2020usable}, which corresponds to a variational approximation of the mutual information \cite{barber2003variational, poole19variational}. 

\paragraph{Contrastive and Multi-View Approaches}

While (multi-view) contrastive learning aims to learn a representation of \emph{only} the common information between views \cite{tian2020contrastive, tian2020makes, chen2020simple}, we aim to learn a decomposition of the information in the views into common and unique components. Our work naturally extends to multi-sensor data that have different amounts of common/unique information (e.g., touch and vision). Moreover, contrastive approaches assume that the unique information is nuisance variability, and discard this information. Similarly, \cite{federici2020learning} also seeks to identify common information in both views, but also does not provide an objective to retain the unique information. While the multi-view literature is broad, we are not aware of previous attempts to quantify the common and unique information. Related to our approach, \cite{wang2016deep} aim to find shared and private representations using VAEs, but it differs in how the alignment of shared information is specified and the resulting objective, and they do not provide a way to quantify the information content of the private and shared components. We discuss additional related work in Appendix~\ref{app:related-work}.

\section{Method: G\'acs-K\"orner Variational Auto-Encoder}
\label{sec:method}

Our formulation involves generalizing the G\'acs-K\"orner common information in \cref{eq:def} to the case where $f$ and $g$ are stochastic functions so that the optimization problem becomes:
\begin{align}
\Cs :&= \max_z I(x_i;z) \quad \label{eq:mi-def-1} \\ 
\text{s.t.}\; z &= f_s(x_1) = g_s(x_2),
\label{eq:mi-def-2}    
\end{align}
where $f_s$ and $g_s$ are \emph{stochastic} functions. By the equality in \cref{eq:mi-def-2}, we mean that $p(z|x_1) = p(z|x_2)$ for all $(x_1, x_2) \sim p(x_1,x_2)$. Note that when $f$ and $g$ are deterministic functions\footnote{If the constraint in Eq.~\ref{eq:mi-def-2} is satisfied, the map to the \emph{mean} of the posterior $p(z|x_1)$ (and $p(z|x_2)$) for observations ($x_1$, $x_2$) is a deterministic function and satisfies the constraint of G\'acs-K\"orner common information.} (which are a subset of stochastic functions), then $H(z|x_i) = 0$ and we recover the original definition since 
\begin{equation}
I(x_i;z) = H(z) - H(z|x_i) = H(z). 
\end{equation}
Our latter generalization (eq.~\ref{eq:mi-def-1}-\ref{eq:mi-def-2}) is more amenable to optimization and interpretable, as we will later demonstrate. 
In \cref{eq:mi-def-1}, we used $x_i$ as a placeholder since when $p(z|x_1) = p(z|x_2)$ for all $(x_1,x_2) \sim p(x_1,x_2,z)$ then $I(z;x_1) = I(z;x_2)$ since
\begin{align*}
    I(z;x_1) &= \E_{x_1 \sim p(x_1)} [KL(p(z|x_1) || p(z))] \\
    &= \E_{(x_1,x_2) \sim p(x_1, x_2)} [KL(p(z|x_1) || p(z))] \\
    &= \E_{(x_1,x_2) \sim p(x_1, x_2)} [KL(p(z|x_2) || p(z))] \\
    &= \E_{x_2 \sim p(x_2)} [KL(p(z|x_2) || p(z))] \\
    &= I(z;x_2).
\end{align*}
This means that another equivalent formulation to maximize is $\max_z \frac{1}{2}\sum_i I(x_i;z) = \max_z I(x_i;z)$, for any $i$.
To optimize the objective in eq.~\ref{eq:mi-def-1}-\ref{eq:mi-def-2}, we need to learn a set of latent factors $z$ that maximize $I(x_i;z)$, while satisying the constraint in eq.~\ref{eq:mi-def-2}. We propose an optimization reminiscent of the VAE objective. Define $\x=(\x_1, \x_2)$ as the concatenation of both views, and $\z=(\z_{u}^1, \z_c, \z_{u}^2)$ as a decomposition of the representation into common and unique components, and $\z_i = (\z_{u}^i, \z_c$). In particular, we seek to learn latent encodings through an encoder $\qphi$, which maps $\x$ to $\z$. To optimize the objective, the representation $\z$ should maximize $I(x_i;z)$, and so we should also learn a decoder $\ptheta$ that minimizes $H(x_i|z)$. This corresponds to the reconstruction term in a traditional VAE, though note here we reconstruct both views.
\begin{equation}
    \mathcal{L}_\text{CVAE}^1 = \E_{p(\x)} [\ \E_{\qphi} [- \log\ptheta]]
\end{equation}
Without any constraints, this could be achieved trivially by using an identity mapping. 
To ensure that the latents encode only common information between the different views, we decompose the encodings to ensure the following constraint corresponding to \cref{eq:mi-def-2}:
\begin{equation}
    D(q_{\phi_{c_1}}, q_{\phi_{c_2}}) = KL(q_{\phi_{c_1}}(\z_c|\x_1)\ ||\ q_{\phi_{c_2}}(\z_c|\x_2)) \ = 0.
\end{equation}
Here $q_{\phi_{c_i}}(\z_c|\x_i)$ maps $\x_i$ to $\z_{c_i}$
Rather than enforcing a hard constraint, in practice it is easier to optimize the corresponding Lagrangian relaxation:
\begin{equation}
    \mathcal{L}_\text{CVAE}^2 = \E_{p(\x)} [\ \E_{\qphi} [- \log\ptheta] + \lambda_c D(q_{\phi_{c_1}}, q_{\phi_{c_2}})].
\end{equation}

After optimizing this objective, for a sufficiently large $\lambda$ so that $D(q_1,q_2) \approx 0$, the common information would be:
\begin{equation}
    \Cs = \E_{p(\x)} [\  KL(q_{\phi_{c_i}}(\z_c|\x_i)\ ||\ q^{*}(\z))\ ],
\end{equation}
where $q^{*}(\z)$ is the marginal distribution induced by the encoder.
However, estimating the true marginal $q^{*}(\z)$ is difficult for high-dimensional problems. In practice, we follow \cite{kingma2014autoencoding} and learn an approximate prior $p(z) \approx q^*(\z)$, where both  $\qphi$ and $p(\z)$ are taken from   a given family of distributions (such as multivariate Gaussians with diagonal covariance matrix). This will additionally enable us to sample from the distribution, and interpret the latent factors. To learn $p(\z)$ we also add the following regularization to our training objective:
\begin{equation}
     \E_{p(\x)} [\  KL(q_{\phi}(\z|\x)\ ||\ p(\z))\ ].
\end{equation}
Alternatively, we can also exploit the degree of freedom in learning $q_{\phi}(\z|\x)$ and fix $p(\z)$ to be $\mathcal{N}(0,I)$. In both cases, our overall objective becomes:
\begin{equation}
\label{eq:common}
    \mathcal{L}_\text{CVAE} = \E_{p(\x)} [\ \E_{\qphi} [- \log\ptheta] \\ + \lambda_c D(q_{\phi_{c_1}}, q_{\phi_{c_2}}) + \beta KL(q_{\phi}(\z|\x)\ ||\ p(\z)) ].     
\end{equation}

Optimizing this objective alone could lead to unexplained components of information, for example the unique components. Alternatively, unique information present in the individual views may be encoded in the ``common'' latent variable if the reconstruction benefits outweighed the cost of the divergence between the posteriors of the encoders (the term corresponding to the $\beta$).

In addition to these common latent components, we can learn unique latent components by optimizing a traditional VAE objective (i.e. with $\lambda_c$ = 0) for a subset of the latent variables. Importantly we also need to ensure that the KL penalty for the unique component subset is greater than for the common subset (so that it is beneficial to encode common information in the common latent components).
Our final objective becomes
\begin{multline}
\label{eq:cvae-full}
    \mathcal{L}_\text{CVAE} = \E_{p(\x)} [\ \E_{\qphi} [- \log\ptheta] + \lambda_c D(q_{{\phi_{c_1}}}, q_{{\phi_{c_2}}}) \\ + \sum_{i=1}^2 \beta_c KL(q_{\phi_{c_i}}(\z_{c}|\x_i)\ ||\ p(\z_c)) + \beta_u KL(q_{\phi_{u_i}}(\z_{u}|\x_i)\ ||\ p(\z_u))],   
\end{multline}

where $\beta_c$ and $\beta_u$ correspond to a multiplier enforcing the cost of encoding common and unique information respectively. Importantly $\beta_u > \beta_c > 0$, resulting in a larger penalty on the unique latent variables (otherwise all the information would be encoded in the ``unique'' components).  The summation in the bottom part of Eq.~\ref{eq:cvae-full} corresponds to a summation over both encoders. $p(\z_u)$ and $p(\z_c)$ are both sampled from $\mathcal{N}(0,I)$ of appropriate dimensionality.%

We now show that, if the network architecture used for the VAE implements a generic enough class of encoder/decoders our method will recover the GK common information.

\begin{theorem}[GK VAE recovers the common information] 
\label{thm:GKVAE}
Suppose our observations $(\x_1,\x_2)$ have GK common information defined through the random variable $\z_c$ satisfying eq.~\ref{eq:mi-def-1}-\ref{eq:mi-def-2}
and that our parametric function classes $q(\z|\x)$ and $p(\x|\z)$ optimized over can express any function. Then, our optimization (with $\beta_c =0$, $\beta_u < 1$ and decoder $p(\x|\z) = \sum_{i=1}^2 p_i(\x_i|\z_i)$) will recover latents $\hat{\z} = (\hat{\z}_u^1, \hat{\z}_u^2, \hat{\z}_c)$ where $\hat{\z}_c$ is the common random variable that maximizes the ``stochastic'' GK common information in eq.~\ref{eq:mi-def-1}-\ref{eq:mi-def-2}, while $\hat{\z}_u^i$ is the unique information of the $i$-th view, which maximizes $I(\x_i; \z_u^i, \hat{\z}_c$).
\end{theorem}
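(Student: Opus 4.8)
The plan is to reduce the training objective \cref{eq:cvae-full} to a purely information-theoretic optimization over stochastic encoders, and then identify its minimizers with the stochastic GK decomposition of eq.~\ref{eq:mi-def-1}-\ref{eq:mi-def-2}.

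First I would fix the encoder and optimize over the decoder. Because the decoder factorizes across views, the reconstruction term splits as $\sum_{i=1}^2 \E_{p(\x)}\E_{\qphi}[-\log p_i(\x_i|\z_i)]$, and each summand is minimized in $p_i$ by the true posterior of $\x_i$ given $\z_i$ under the joint $p(\x)\qphi$, so its optimal value equals $\sum_i H(\x_i|\z_i)=\sum_i\big(H(\x_i)-I(\x_i;\z_u^i,\z_c)\big)$ --- the ``usable information equals mutual information for an unrestricted variational family'' identity underlying the VAE bound. With $\beta_c=0$ and a learned prior, the regularizer reduces to $\beta_u\sum_i I(\x_i;\z_u^i)$ (with a fixed prior it is instead an upper bound differing by $\sum_i KL(q^*(\z_u^i)\,\|\,p(\z_u))$, which does not affect the argument). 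Finally, letting $\lambda_c$ be large enough that $D(q_{\phi_{c_1}},q_{\phi_{c_2}})\to 0$ forces $q_{\phi_{c_1}}(\z_c|\x_1)=q_{\phi_{c_2}}(\z_c|\x_2)$ almost surely under $p(\x_1,\x_2)$, which is exactly the stochastic feasibility constraint \cref{eq:mi-def-2}; hence $\hat\z_c$ is feasible and $I(\x_i;\hat\z_c)\le\Cs$. After these reductions I must minimize, over stochastic encoders meeting the constraint, the quantity $\sum_{i=1}^2\big(H(\x_i)-I(\x_i;\z_u^i,\z_c)\big)+\beta_u\sum_{i=1}^2 I(\x_i;\z_u^i)$.

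Next I would use $\beta_u<1$ to pin down the unique latents. Since $\z_u^i$ and $\z_c$ are stochastic functions of $\x_i$ with independent internal noise, $\z_u^i\perp\z_c\mid\x_i$, hence $I(\x_i;\z_u^i)=I(\x_i,\z_c;\z_u^i)\ge I(\x_i;\z_u^i\mid\z_c)$, with equality iff $I(\z_u^i;\z_c)=0$. Combining this with $I(\x_i;\z_u^i,\z_c)=I(\x_i;\z_c)+I(\x_i;\z_u^i\mid\z_c)$ and $I(\x_i;\z_u^i\mid\z_c)\le H(\x_i\mid\z_c)=H(\x_i)-I(\x_i;\z_c)$, the objective is bounded below by $\sum_i\big(H(\x_i)-I(\x_i;\z_c)-(1-\beta_u)I(\x_i;\z_u^i\mid\z_c)\big)\ge\beta_u\sum_i\big(H(\x_i)-I(\x_i;\z_c)\big)$, where the last step uses $1-\beta_u>0$. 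Equality holds only when $(\z_u^i,\z_c)$ is a sufficient statistic for $\x_i$, i.e. $I(\x_i;\z_u^i,\z_c)=H(\x_i)$ is maximal --- precisely the claimed property of $\hat\z_u^i$ --- and only when $\z_u^i\perp\z_c$, so no common information leaks into $\z_u^i$.

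Finally I would optimize over $\z_c$: by feasibility $I(\x_i;\z_c)\le\Cs$, so the bound becomes the constant $\beta_u\sum_i(H(\x_i)-\Cs)$, with equality forcing $I(\x_i;\z_c)=\Cs$, i.e. $\z_c$ is a maximizer of the stochastic GK problem eq.~\ref{eq:mi-def-1}-\ref{eq:mi-def-2}. To show the bound is attained I would take $\hat\z_c=\z_c^\star$ a GK maximizer (feasible by hypothesis) and build each $\hat\z_u^i$ by re-randomizing a minimal sufficient complement of $\x_i$: conditionally on $\z_c^\star$, emit a $\mathrm{Unif}[0,1]$ variable via a partition of $[0,1]$ into intervals of lengths $p(\x_i\mid\z_c^\star)$, so that $(\hat\z_u^i,\z_c^\star)$ recovers $\x_i$ while the conditional law of $\hat\z_u^i$ given $\z_c^\star$ is $\mathrm{Unif}[0,1]$ regardless of the value of $\z_c^\star$; the stochasticity permitted by $f_s,g_s$ makes this legitimate, so $\hat\z_u^i\perp\z_c^\star$ and $(\hat\z_u^i,\z_c^\star)$ is sufficient. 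This encoder lies in the idealized function class, hence is globally optimal, and by the previous steps every optimum meets the same three conditions; therefore $\hat\z_c$ maximizes eq.~\ref{eq:mi-def-1}-\ref{eq:mi-def-2} and $\hat\z_u^i$ maximizes $I(\x_i;\z_u^i,\hat\z_c)$.

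The step I expect to be the main obstacle is making this rigorous for continuous observations: ``$\min$ reconstruction loss $=H(\x|\z)$'' and ``zero reconstruction loss $\Leftrightarrow$ sufficiency'' are literal only for discrete $\x$; for images one must read the conditional entropies as differential entropies and interpret sufficiency as a limit (decoder variance $\to 0$), and the exchange argument implicitly assumes optima are attained within, rather than merely approached by, the idealized class. A lesser subtlety is justifying that the $\lambda_c\to\infty$ hard-constraint limit commutes with optimizing the remaining terms, which is why the theorem is stated for ``$\lambda_c$ large enough that $D\approx 0$''.
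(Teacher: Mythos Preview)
Your proof is correct and, while it uses the same basic identities as the paper (the chain rule $I(\x_i;\z_u^i,\z_c)=I(\x_i;\z_c)+I(\x_i;\z_u^i\mid\z_c)$ and $H(\x_i\mid\z_c)=H(\x_i)-I(\x_i;\z_c)$), it is organized differently. The paper argues \emph{sequentially}: first solve the $\z_c$-only problem with $\beta_c=0$ and identify its minimizer with the stochastic GK optimum, then solve for $\z_u^i$ with $\hat\z_c$ fixed, and finally verify by contradiction that any $\tilde\z_c$ with $I(\tilde\z_c;\x_i)<I(\hat\z_c;\x_i)$ yields a strictly worse joint objective. You instead give a single \emph{global lower bound} on the joint objective, $\beta_u\sum_i\big(H(\x_i)-\Cs\big)$, and read off the equality conditions (feasibility, sufficiency of $(\z_u^i,\z_c)$, and $\z_u^i\perp\z_c$). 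Your route buys a cleaner treatment of joint optimality: the paper's inequality (its Eq.~(A.1)/\ref{eq:proof-max}) asserts that replacing $\hat\z_c$ by $\tilde\z_c$ strictly decreases $\max_{\z_u^i}\,I(\x_i;\z_c)+I(\x_i;\z_u^i\mid\z_c)-\beta_u I(\z_u^i;\x_i)$, which is delicate because the optimal $\z_u^i$ can change with $\z_c$; your chain of inequalities sidesteps this. You also supply two pieces the paper omits: an explicit construction of an optimum (the uniform-on-$[0,1]$ re-randomization of a sufficient complement) and the observation that equality forces $\z_u^i\perp\z_c$, which prevents common information from leaking into the unique block. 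One minor caveat: your conditional-independence step $\z_u^i\perp\z_c\mid\x_i$ relies on the encoder factorizing across latent blocks (independent sampling noise), which is implicit in the paper's VAE setup but not stated in the theorem; you may want to flag it explicitly alongside the continuous-$\x$ and $\lambda_c\to\infty$ caveats you already raise.
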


We provide the proof in Appendix~\ref{app:proofs}. Note that while the previous theorem guarantees that we will be able to separate the common and unique factors at the block level, we might not be able to disentangle the individual common factors.

\subsection{Quantifying the common information}

Suppose $D(q_{\phi_{c_1}}, q_{\phi_{c_2}}) = 0$. The term corresponding to the rate $R_c$ of the VAE
\begin{equation}
R_c = I_q(z_c;x) = \E_{p(\x)}KL(q_{\phi_c}(\z_c|\x)\ ||\ p(\z_c))   
\end{equation}
is neither an upper nor lower bound on the true common information. It represents an upper bound to the information encoded in the representation specified by $q_{\phi_c}(\z_c|\x)$, 
but does not bound the true common information in the data, since $q_{\phi_c}(\z_c|\x)$ itself is a variational approximation.

To find a lower bound on the common information encoded in the dataset, we can use any mutual information estimator $\hat{I}$ that is a lower bound (see \cite{poole19variational} for several). The approximate common information can then be quantified by $\hat{I}(z_q,x)$, where $z_q \sim q_{\phi_c}(\z_c|\x)$. We report both the rate $R_c$ and $\hat{I}$ in the paper. We emphasize that $\hat{I}$ can be any mutual information estimator. When the data generating distribution is known, as in our synthetic examples, we employ the ``Usable Information'' estimator, described in Sect.~\ref{sec:metrics}, which is a variational approximation \cite{barber2003variational}.

\begin{figure*}[t!]
    \centering
    \includegraphics[width=0.25\textwidth]{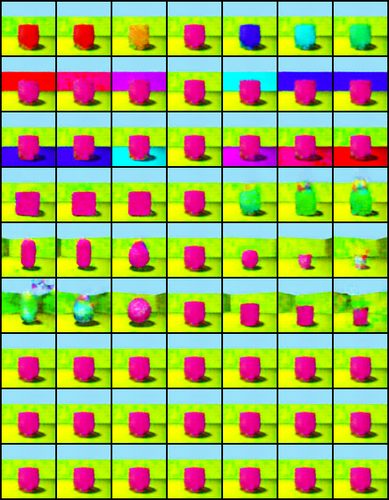}
    \includegraphics[width=0.2\textwidth]{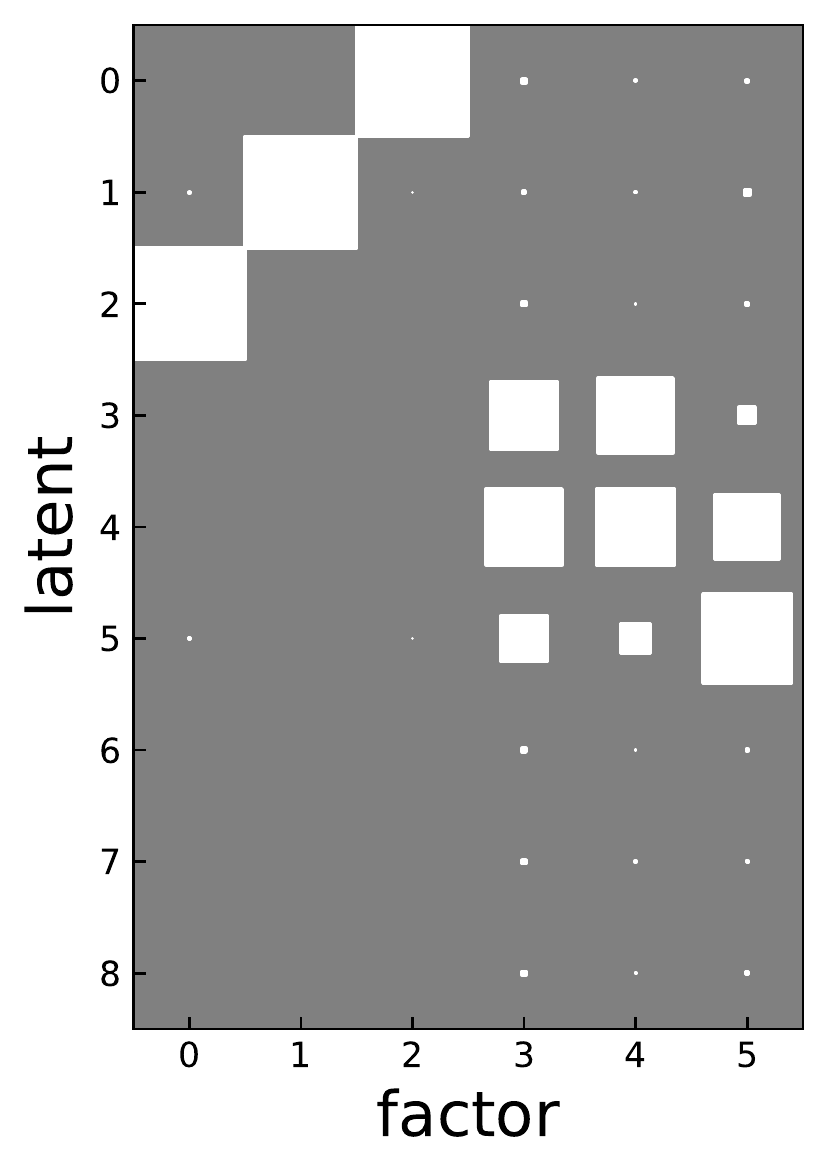}
    \hspace{1em}
    \includegraphics[width=0.3\textwidth]{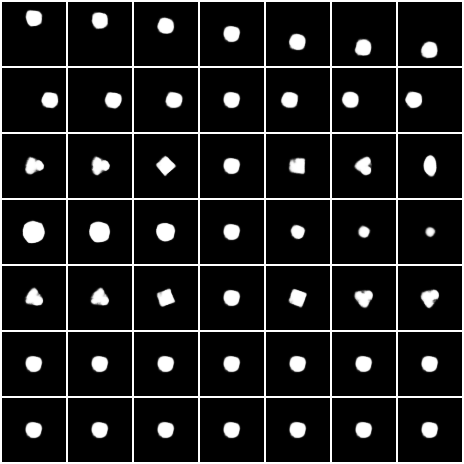}
    \includegraphics[width=0.2\textwidth]{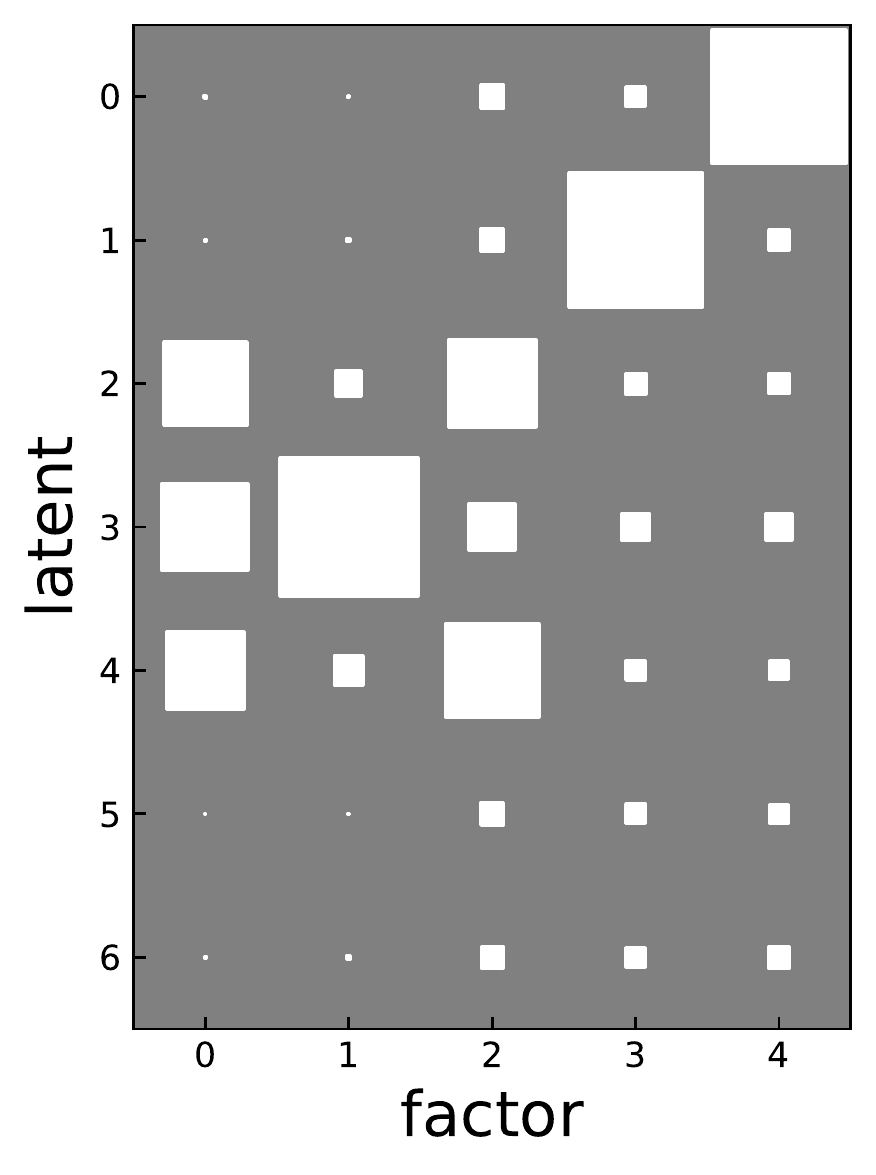}
    \caption{\textbf{Latent traversals and DCI plots show optimization results in separation of common and unique information.} \textbf{(Left) 3dshapes:} The top 3 rows shows the unique latents, the middle 3 the common (and the bottom 3 are the unique latents for the second view). 
    The ground truth unique generative factors are $(0,1,2)$ corresponding to floor color, wall color, and object color. Our model correctly recovers that those factors are unique (first three rows in the figure), and that the other factors are common (middle three rows).
    \textbf{(Right) dsprites:} 
    The top 2 rows shows the unique latent variables, the middle 3 the common (and the bottom 2 are the unique latent variables for the second view). 
    The ground truth unique generative factors are $(3,4)$ corresponding to x-position, y-position respectively. Our model correctly recovers that those factors are unique (first two rows in the figure), and that the other factors are common (middle three rows).}
    \label{fig:traversals-dci}
\end{figure*}

\subsection{Identifiability of the common and unique components}

We now show the conditions under which our optimization will identify the common and unique latent components. 
Usually we do not directly observe the latent factors $z$, but rather an observation generated from them. We may then ask whether the common latent factors can still be reconstructed from this observation. The following proposition shows that this is indeed the case, as long as the function generating $f$ the observation is invertible, i.e., we can recover the latent factors from the observation itself.

\begin{proposition} (\cite{wolf2004zero}, Ex. 1): Define 
\begin{equation*}
\z_1 = (\z_c, \z_u^1), \quad \z_2 = (\z_c, \z_u^2) 
\end{equation*}
where $\z_c, \z_u^1$, and $\z_u^2$ are mutually independent. Then for any invertible transformation $t_i$ the random variable $\z_c$ encodes all the common information:
\begin{equation*}
\z_c = \arg \max_{\hat{\z}} C_{GK}(t_1(\z_1), t_2(\z_2))     
\end{equation*}
\end{proposition}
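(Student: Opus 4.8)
The plan is to first strip off the transformations $t_1,t_2$ by an invariance argument, and then show directly that $\z_c$ is, up to relabeling, the unique maximizer of the G\'acs--K\"orner problem \cref{eq:def} for the untransformed pair $(\z_1,\z_2)$.

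\emph{Step 1 (reduction to $t_i=\mathrm{id}$).} I would observe that for invertible $t_1,t_2$ a random variable $\hat\z$ is a deterministic function of both $t_1(\z_1)$ and $t_2(\z_2)$ if and only if it is a deterministic function of both $\z_1$ and $\z_2$, since one can pre-compose with $t_i^{-1}$. Hence the feasible set of \cref{eq:def} --- and therefore the whole optimization, including its $\arg\max$ --- is literally unchanged by applying $t_1$ and $t_2$. It thus suffices to show $\z_c = \arg\max_{\hat\z} C_{GK}(\z_1,\z_2)$.

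\emph{Step 2 (feasibility of $\z_c$).} Letting $\pi$ denote the projection onto the common block, $\z_c = \pi(\z_1) = \pi(\z_2)$ is a deterministic function of each view, so $\z_c$ is feasible and $C_{GK}(\z_1,\z_2) \ge H(\z_c)$.

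\emph{Step 3 (optimality and uniqueness).} The crux is to show that any feasible $W$, i.e. any $W$ with $W = f(\z_1) = g(\z_2)$, is in fact a function of $\z_c$ alone. Here I use that mutual independence of $\z_c,\z_u^1,\z_u^2$ gives $\operatorname{supp}(\z_i) = \operatorname{supp}(\z_c)\times\operatorname{supp}(\z_u^i)$ and that every ``rectangle'' pair $\big((c,u^1),(c,u^2)\big)$ with $c\in\operatorname{supp}(\z_c)$ and $u^i\in\operatorname{supp}(\z_u^i)$ has positive joint probability. Fixing $c$ and arbitrary $u^1,\tilde u^1\in\operatorname{supp}(\z_u^1)$, and choosing any $u^2\in\operatorname{supp}(\z_u^2)$, feasibility forces $f(c,u^1) = g(c,u^2) = f(c,\tilde u^1)$, so $f(c,\cdot)$ is constant and $W = h(\z_c)$ for some $h$. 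Then $H(W)=H(h(\z_c))\le H(\z_c)$, which together with Step 2 gives $C_{GK}(\z_1,\z_2) = H(\z_c)$ with the maximum attained at $\z_c$; and any other maximizer $W=h(\z_c)$ with $H(W)=H(\z_c)$ must have $h$ injective on $\operatorname{supp}(\z_c)$, i.e. $W$ and $\z_c$ agree up to a bijective relabeling, which is the sense in which $\z_c$ is the $\arg\max$. Combining with Step 1 finishes the proof.

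The main obstacle is Step 3, and within it the only delicate point is the support/independence bookkeeping: the claim that $f(c,\cdot)$ is constant relies on both the product form of $\operatorname{supp}(\z_i)$ and the positivity of all rectangle probabilities, which should be stated carefully (they hold for discrete independent variables), and the final conclusion should be phrased modulo bijective relabeling since $C_{GK}$ only pins down its maximizer up to isomorphism. The remaining steps --- invariance of $C_{GK}$ under separate bijections and feasibility of $\z_c$ --- are routine; overall this recovers \cite{wolf2004zero}, Ex.~1.
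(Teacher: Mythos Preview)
Your proposal is correct and follows the same two-part structure as the paper: an invariance reduction to the untransformed pair (your Step~1 matches the paper's argument via pre-composition with $t_i^{-1}$), followed by the identity case. The paper, however, simply \emph{asserts} that $\z_c$ is the $\arg\max$ when $t_i=\mathrm{id}$ and does not prove it, whereas your Steps~2--3 actually supply that argument via the rectangle/support reasoning from mutual independence. So your proof is strictly more complete than the paper's; the only thing to keep in mind is that your Step~3, as you note, is written for the discrete case and would need the obvious a.s.\ caveats in general, and that the $\arg\max$ is determined only up to bijective relabeling, which you already flag.
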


We provide the proof in Appendix~\ref{app:proofs}. The above proposition shows that when a set of factors is shared between views and when the unique factors are sampled independently, then the GK common random variable corresponds to shared latent factors.
In particular, if the observations $\x_i$ are generated through an invertible function $\x_i = f(\z_c, \z_u^i)$ where $\z_c \sim p(\z_c)$ corresponds to the shared factors, the proposition shows that such factors can be recovered from the observations by maximizing the GK common information. In our GK VAE optimization, we optimize the ``stochastic'' GK common information and we also find in our experiments that we can (approximately) recover the latent factors from observations $\x_i$ generated from this process.

\section{Experiments}

We train our GK-VAE models with Adam using a learning rate of $0.001$, unless otherwise stated. When the number of ground truth latent factors is known, we set the size of the latent vector of the VAE equal to the number of ground truth factors. This choice was not necessary, and we obtain analogous results when the size of the latent vector of the VAE was larger than the number of ground-truth factors (Fig.~\ref{fig:effect-betaU}). To improve optimization, we use the idea of free bits \cite{kingma2016improved} and we set $\lambda_\text{free-bits} = 0.1$. This was easier than using $\beta$ scheduling \cite{burgess2018understanding}, since it only involved tuning one parameter. We set $\beta_u$ to be $10$, $\beta_c$ to be $0.1$ and $\lambda_c = 0.1$. We trained networks for $70$ epochs, except for the MNIST experiments, where we trained for $50$ (details in the Appendix).

To ensure that the latents are shared to both encoders, during training we randomly sample $\z$ from either encoder $\qiphi$  with $p = 0.5$. We opted to randomly sample the latents from each encoder, as opposed to performing averaging, to ensure that the latent will always be a function of an individual view $\x_i$. This is in addition to the soft constraint governed by $\lambda_c$ in the loss.\footnote{Our experiments can be reproduced in approximately $3$ days on a single GPU (g4dn instance).}

\subsection{Evaluation Datasets}
\label{sec:datasets}

We primarily focus on the setting where the ground truth latent factors and generative model are known, in order to quantitatively benchmark our approach. To do so, we constructed datasets with ground truth latent factors so that some of the latent factors are shared between each views. That is, the generative model for the data $(\x_1, \x_2)$ is
\begin{equation}
\label{eq:generative}
\x_1 = f (\z_{u}^1, \z_c), \quad  
\x_2 = f (\z_{u}^2, \z_c),  
\end{equation}
where $\z_c$ is shared between the views and $\z_{u}^i$ is the unique information encoded in the $i^{th}$ view and $f$ corresponds to a rendering function.

To construct such datasets, we modified the \emph{3dshapes} \cite{3dshapes18}  and the \emph{dsprites} dataset \cite{dsprites17}. 
We select a subset of the latent factors to be shared between the views, while the remaining factors are sampled independently for each view.
The \emph{3dshapes} dataset \cite{3dshapes18} contains six independent generating factors: floor color, background color, shape color, size, shape, and viewpoint. Each latent factor can only take one of a \emph{discrete} number of values. The \emph{dsprites} dataset \cite{dsprites17} contains six independent generating factors: color, shape, scale, rotation, x and y position. Each latent factor can only take one of a \emph{discrete} number of values. When we generate multi-view data following the generative model in \cref{eq:generative}, we refer to these datasets as \emph{Common-3dshapes} and \emph{Common-dsprites} respectively. We consider additional variants in the Appendix.

We also examine the \emph{Rotated Mnist Dataset.} where the two views are two random digits of the same class to which a random rotation is applied. In particular, the class of the digit is common information between the views whereas the rotation is unique. We also examine the synthetic video dataset \emph{Sprites} (not to be confused with \emph{dsprites}) described in \cite{li2018disentangled} and evaluate the common information in frames separated $t$ frames apart. %

\begin{table*}[]

\centering

\resizebox{\linewidth}{!}{
\begin{sc}
\begin{tabular}{lccccccr}
\toprule
 & floor hue (10) & wall hue (10) & obj. hue (10) & scale (8) & shape (4) & angle (15) & KL Total \\
\midrule
Common    & -0.01 & -0.02 & -0.03 & 2.73 & 1.98 & 3.83 & 15.0\\
Unique & 3.31 & 3.31 & 3.31 & 0.19 & 0.37 & 0.19 & 12.7\\
Total    & 3.31 & 3.31 & 3.31 & 2.69 & 1.98 & 3.82 & 27.7\\
\bottomrule
\end{tabular}
\end{sc}
}

\vspace{0.5em}

\caption{Usable Information (in bits) in representation for \emph{3dShapes}. The ground truth unique generative factors are \emph{floor color}, \emph{wall color}, and \emph{object color}, and the common generative factors are \emph{shape} and \emph{angle}.
The common information is separated from the unique information. The ground truth factors were almost perfectly encoded in the latents. The numbers in parenthesis represents the number of discrete factors for each latent variable.} 
\label{tab:shapes}

\end{table*}

\subsection{Metrics}
\label{sec:metrics}

\paragraph{DCI Disentanglement Matrix \cite{eastwood2018framework}.} 
    Let $d$ be the dimension of the latent space and let $\tb$ be the true generating factors. The idea is to  train a regressor $f_j(\z): \R^d \to \R$ to predict the ground truth factors $\tb_j$ for each $j$. This results in a matrix of coefficients that describe the importance of each latent for predicting each ground truth factors. This can then be visualized as a matrix where the size of the square reflects the coefficient. We use this metric to assess the partitioning of the learned common and unique representations. We used the random forest regressor, similar to \cite{locatello2019challenging} to predict a \emph{discrete} number of latent classes.

\paragraph{Usable Information \cite{Xu2020theory, dubois2020learning, kleinman2020usable}.} We use this to approximate the mutual information when $H(x)$ is known, as it is in the datasets previously described. It is a lower bound to mutual information. We use this to lower bound the information contained in the representation $z$ in the next section.

\subsection{Results}

\textbf{Separation of Common and Unique Latent Variables.} We first examine whether our formulation can correctly separate the common and unique latent factors. After optimizing a network on our \textit{Common-3dshapes} dataset we examined how much information about the ground-truth latent factors were encoded in the common latents $\z_c$ and the unique latents $\z_u$ (Table 1).

Given the encoded representation specified by $\qphi$, we evaluated the usable information for the two latent components ($\z_c$ and $\z_u$), as well as by using the complete latent variable $\z$.
As done in previous work \cite{locatello2019challenging}, we directly use the mean of $\qphi$ as our representation $\z$ rather than sampling. In Table 1, we see that the common and unique information was perfectly separated. Note, that information values reported are a lower bound to the true information, as our variational approximation is a lower bound to $I_q(z;x)$ (which is itself a variational approximation). 
Our method accurately encodes all common information between views (ground truth: 3.32 bits for floor, wall, and background hue; 3 bits for scale; 2 bits for shape; 3.91 bits for orientation). 

We also performed these analyses on the \emph{Common-dsprites} dataset and found similar results (Table 2, Appendix). In particular, the unique latent factors corresponding to position are encoded in the unique components of the latent representation, while the other factors are encoded in the common latent representation. We emphasize that the generative model was not used at all during training, and was only used for quantitative evaluation after training. 

In Fig.~\ref{fig:traversals-dci}, we show the DCI matrix \cite{eastwood2018framework} which visually reaffirm that the common and unique factors are properly identified at the block-level for both the \textit{Common-3dshapes} and \emph{Common-dsprites} datasets. We also include traversals of the prior shown in Fig.~\ref{fig:traversals-dci} to show qualitatively that the learned factors of variation are meaningful and can be interpreted.
Additional runs are in Appendix~\ref{app:add-exps}.

\begin{figure}[t!]
    \centering
    \raisebox{-0.5\height}{\includegraphics[width=0.25\textwidth]{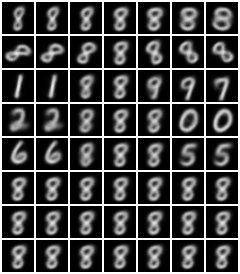}}
    \hspace{1em}
    \raisebox{-0.5\height}{\includegraphics[width=0.1\textwidth]{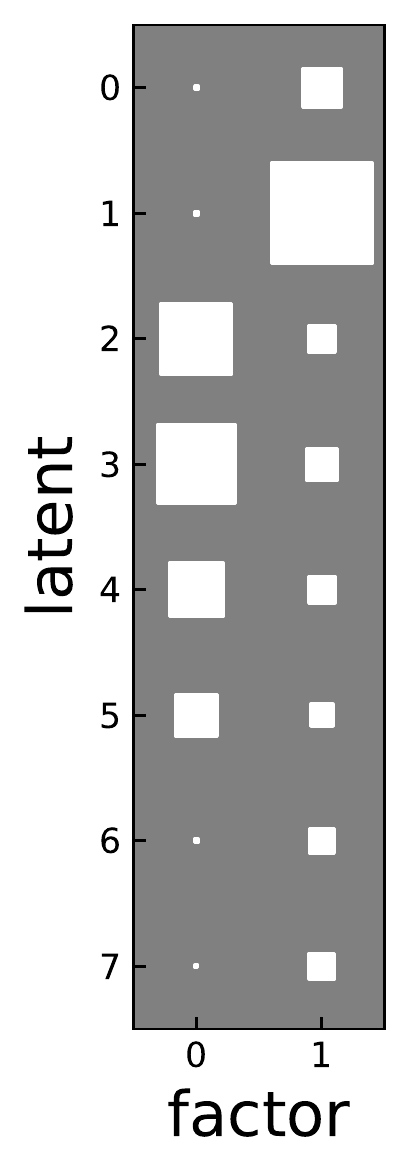}}
    \hspace{1em}
    \raisebox{-0.5\height}
    {\includegraphics[width=0.30\textwidth]{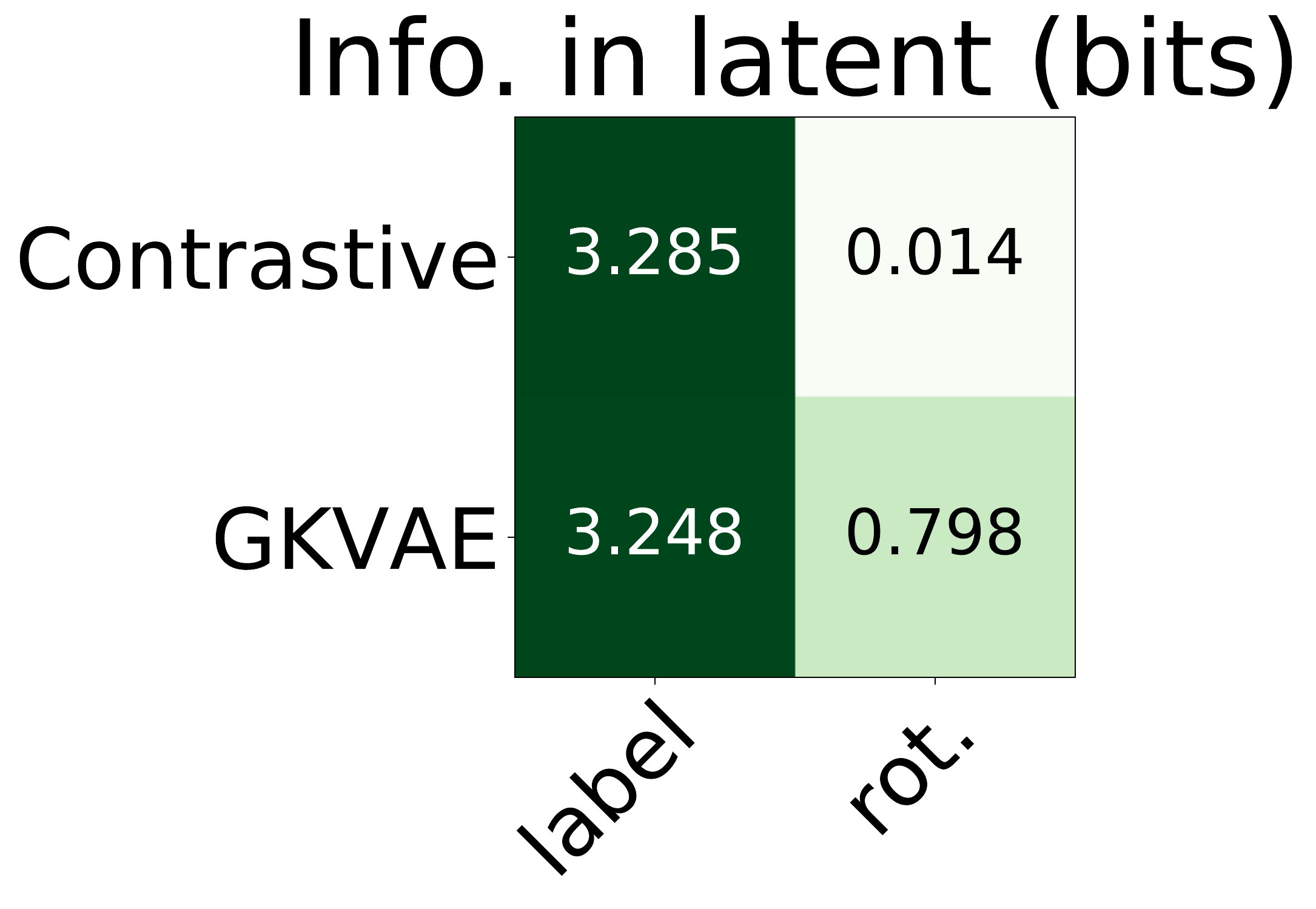}}
    \caption{\textbf{Left.} Traversals for \emph{Rotated Mnist}. 
    The unique components of the latent (rows 1,2) appear to encode the ``thickness" and rotation of the digit, whereas the common components appear to represent the overall digit (rows 3-6); and also the output of view 1 does not depend on the latents in rows 7,8 (these correspond to the unique components for view 2.) \textbf{Middle.} Corresponding DCI matrix, where factor 0 corresponds to the label, while factor 1 corresponds to the rotation (discretized into 10 bins). \textbf{Right.} Comparison against contrastive implementation from \cite{tian2020contrastive}, where the contrastive approach does not encode any usable information about the unique factor (the rotation).
    }
    \label{fig:mnist}
\end{figure}

\textbf{Rotated Mnist and Comparison with Contrastive Learning.}
As described before, we generate a dataset of paired views of digits of the same class, each rotated by an independent random amount. In this manner, the unique information is about the rotation, whereas the common information is about the class. In Fig.~\ref{fig:mnist} we see that the unique components of the latent (rows 1, 2) appear to encode the rotation and ``thickness" of the digit, whereas the common components seem to represent the class of the digit (rows 3-6). Also, as expected, the output of view 1 does not depend on the latents in rows 7, 8 which by construction correspond to the unique components of view 2.

This setup is reminiscent of contrastive learning, where the goal is to learn a representation which is invariant to a random data augmentation of the input (such as a random rotation). By construction, contrastive learning aims to encode the common information before and after data augmentation, but may not encode any other information. This can lead to degraded performance on downstream tasks, as the discarded unique information may still be important for the task \cite{tian2020makes, wang2022rethinking}. On the other hand, our GK-VAE separates the unique and common information without discarding information.

To highlight this difference between approaches, we trained using a contrastive objective\footnote{We used the code from: https://github.com/HobbitLong/CMC (BSD 2-Clause License)} \cite{tian2020contrastive}, and found that indeed while we can decode the shared class label, we cannot decode the unique rotation angle of view 1 (discretized into $10$ bins; Fig.~\ref{fig:mnist}, right). On the other hand, using our method we recover the common and unique information.

\textbf{Common information across time in sequences from videos.}
The existence of common information though time is another important learning signal. To study it, we perform an experiment on the \emph{Sprites} dataset described in \cite{li2018disentangled}. This dataset consists of synthetic sequences all with $8$ frames.    We optimized using the same architecture and hyperparameters except we set $\lambda_c = 0.5$.
We examine the common information between frames $t$ frames apart, approximated using the KL divergence term. In particular, the two views are two frames $(\x_1, \x_{t + 1})$, where each pair belongs to a different video sequences. In Fig.~\ref{fig:video} we see that in general, as $t$ increases the common information between the frames decreases evidencing the fact that, due to the random temporal evolution of the video, common information is lost as time progresses. We also note that the common information appears to increase in the last frame; this could be that in many of the sequences the sprite returns close to the initial state (see Fig. 3 in \cite{li2018disentangled}).

\begin{figure}[]
    \centering
    \raisebox{-0.5\height}{\includegraphics[width=0.35\textwidth]{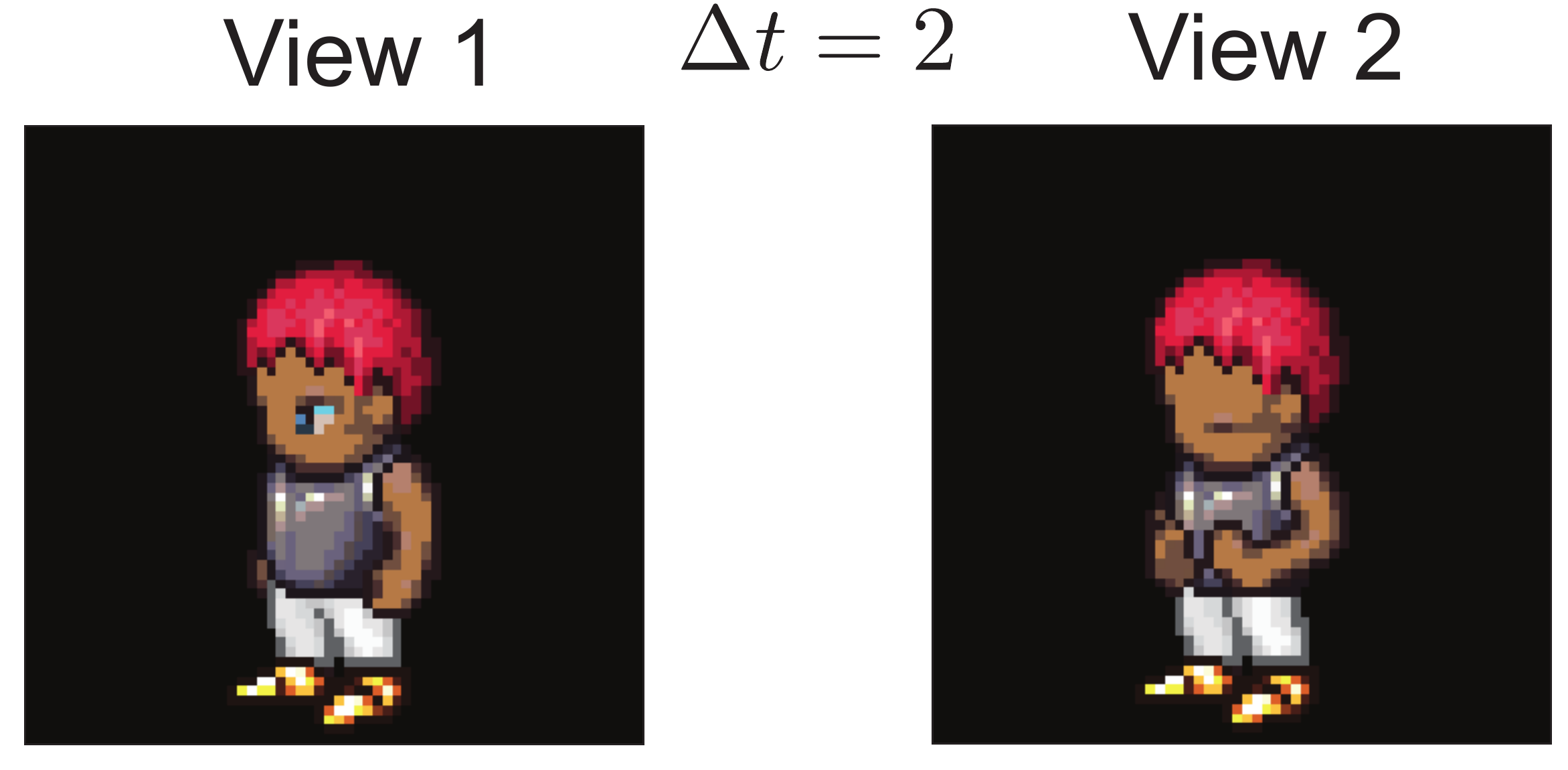}}%
    \hspace{5em}
    \raisebox{-0.5\height}{\includegraphics[width=0.33\textwidth]{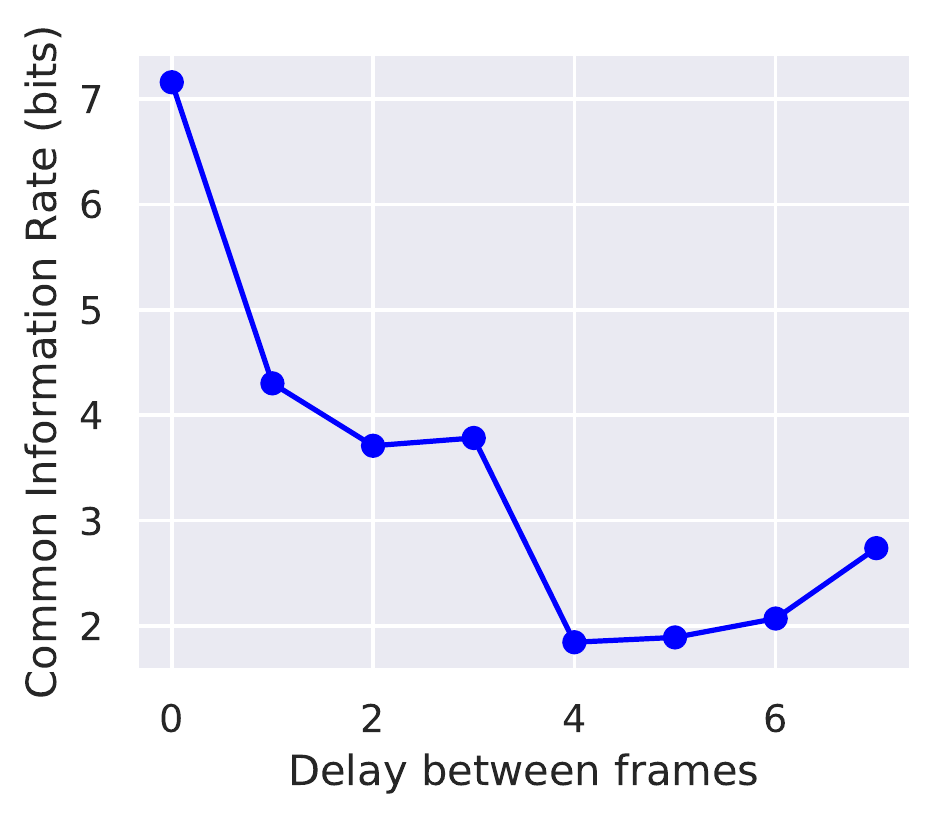}}%
    \caption{Sprites \cite{li2018disentangled} video experiment. \textbf{(Left)} Example views separated $2$ frames apart. 
    \textbf{(Right)} Common information as a function of delay between frames. In general common information is decreasing as the delay gets longer. }
    \label{fig:video}
\end{figure}

\section{Discussion}

We show formally and empirically that we can partition the latent representation of multi-view data into a common and unique component, and also provide a tractable approximation for the G\'acs-K\"orner common information between high dimensional random variables.
In many practical scenarios where high dimensional data comes from multiple sensors, such as neuroscience and robotics, it is desirable to understand and quantify what is common and what is unique between the observations. 
Motivated by the definition of common information proposed by G\'acs and K\"orner \cite{gacs1973common}, we 
propose a variational relaxation and show that it can be efficiently learned from data by training a slighly modified VAE.
Empirically, we demonstrate that our formulation allows us to learn semantically meaningful common and unique factors of variation.
Moreover, our formulation allows us to approximate the G\'acs-K\"orner common information for realistic high-dimensional data, which has been a difficult problem \cite{salamatian2020approximate}. Our formulation is also a generative multi-view model that allows sampling and manipulation of the common and unique factors.

As the common information was motivated by an information theoretic coding problem \cite{gacs1973common}, our work naturally relates to compression schemes. Indeed, approximate forms of the common information, discussed further in Appendix~\ref{app:related-work}, are scenarios for distributed compression, since the common information needs to only be transmitted once \cite{salamatian2016maximum, salamatian2020approximate}. It may be interesting to combine our approach with recent advances in practical compression algorithms that leverage VAEs \cite{townsend2019practical}.

\section*{Acknowledgments}

MK was supported by the National Sciences and Engineering Research Council (NSERC) and a fellowship from the Science Hub for Humanity and Artificial Intelligence (UCLA/Amazon). JCK acknowledges financial support from NSF CAREER 1943467. SS acknowledges financial support from ONR N00014-22-1-2252.

\bibliographystyle{unsrt}
\bibliography{bibliography}

\newpage

\newpage
\appendix
\onecolumn

\section{Proofs}
\label{app:proofs}

\begin{theorem}[GK VAE recovers the common information] Suppose our observations $(\x_1,\x_2)$ have GK common information defined through the random variable $\z_c$ satisfying eq.~\ref{eq:mi-def-1}-\ref{eq:mi-def-2}
and that our parametric function classes $q(\z|\x)$ and $p(\x|\z)$ optimized over can express any function. Then, our optimization (with $\beta_c =0$, $\beta_u < 1$ and decoder $p(\x|\z) = \sum_{i=1}^2 p_i(\x_i|\z_i)$) will recover latents $\hat{\z} = (\hat{\z}_u^1, \hat{\z}_u^2, \hat{\z}_c)$ where $\hat{\z}_c$ is the common random variable that maximizes the ``stochastic'' GK common information in eq.~\ref{eq:mi-def-1}-\ref{eq:mi-def-2}, while $\hat{\z}_u^i$ is the unique information of the $i$-th view, which maximizes $I(\x_i; \z_u^i, \hat{\z}_c$).
\end{theorem}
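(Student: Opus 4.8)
The plan is to take $\lambda_c$ large enough that the Lagrangian term in \eqref{eq:cvae-full} forces the hard constraint $D(q_{\phi_{c_1}},q_{\phi_{c_2}})=0$, i.e.\ $p(\z_c|\x_1)=p(\z_c|\x_2)$ almost surely, which is exactly \eqref{eq:mi-def-2}. First I would remove the parametric gaps. Since $p(\x|\z)$ can express any function, the decoder minimizing $\E_{p(\x)}\E_{\qphi}[-\log p(\x|\z)]$ for a fixed encoder is the posterior induced by the encoder, and with the factorization $p(\x|\z)=\sum_i p_i(\x_i|\z_i)$, $\z_i=(\z_u^i,\z_c)$, the reconstruction term then equals $\sum_i H(\x_i\mid \z_u^i,\z_c)$; using the flexibility in the prior (and the reparametrization freedom in $q$) the $\beta_u$ rate term equals $\beta_u\sum_i I(\x_i;\z_u^i)$; and $\beta_c=0$ removes the penalty on $\z_c$. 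The optimization thus reduces to minimizing, over feasible stochastic encoders with $\z_c$ common,
\[
J \;=\; \sum_{i=1}^2 H(\x_i\mid \z_u^i,\z_c)\;+\;\beta_u\sum_{i=1}^2 I(\x_i;\z_u^i).
\]

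Next I would put $J$ in closed information-theoretic form. The common and unique encoders carry independent noise, so $\z_u^i\perp\z_c\mid\x_i$, whence $I(\x_i;\z_u^i\mid\z_c)=I(\x_i;\z_u^i)-I(\z_c;\z_u^i)$ and $H(\x_i\mid\z_u^i,\z_c)=H(\x_i)-I(\x_i;\z_c)-I(\x_i;\z_u^i)+I(\z_c;\z_u^i)$. Using $I(\x_1;\z_c)=I(\x_2;\z_c)$ for any common $\z_c$ (shown in the excerpt), this gives
\[
J \;=\; \sum_i H(\x_i)\;-\;2\,I(\x_1;\z_c)\;+\;\sum_{i=1}^2\bigl[\,I(\z_c;\z_u^i)-(1-\beta_u)\,I(\x_i;\z_u^i)\,\bigr].
\]
For fixed $\z_c$ I would minimize over $\z_u^i$: by the data-processing inequality along the Markov chain $\z_u^i - \x_i - \z_c$ we have $I(\z_c;\z_u^i)\le I(\x_i;\z_u^i)$, and since $1-\beta_u>0$ the minimizer is the $\z_u^i$ retaining exactly the $\z_c$-complementary information of $\x_i$ --- $I(\x_i;\z_u^i)=H(\x_i\mid\z_c)$ with $\z_u^i\perp\z_c$ and $(\z_u^i,\z_c)$ jointly sufficient for $\x_i$ --- giving inner minimum $-(1-\beta_u)H(\x_i\mid\z_c)$. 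Substituting $H(\x_i\mid\z_c)=H(\x_i)-I(\x_1;\z_c)$ back, all dependence on $\z_u^i$ cancels and $J=\beta_u\bigl(\sum_i H(\x_i)-2\,I(\x_1;\z_c)\bigr)$; since $\beta_u>0$, this is minimized by choosing $\z_c$ to maximize $I(\x_1;\z_c)$ subject to being common, i.e.\ $\hat\z_c$ is the maximizer of the stochastic G\'acs--K\"orner common information $\Cs$ of \eqref{eq:mi-def-1}--\eqref{eq:mi-def-2}. The matching $\hat\z_u^i$ then has $(\hat\z_u^i,\hat\z_c)$ sufficient for $\x_i$, i.e.\ $I(\x_i;\hat\z_u^i,\hat\z_c)=H(\x_i)$, so it maximizes $I(\x_i;\z_u^i,\hat\z_c)$ --- which is the claim about the unique latents.

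The step I expect to be the main obstacle is the inner minimization over $\z_u^i$: one must justify that for the GK-optimal $\z_c$ there actually exists a $\z_u^i$ exactly independent of $\z_c$ with $(\z_u^i,\z_c)$ still sufficient for $\x_i$. This needs the conditional laws $p(\x_i\mid\z_c{=}k)$ to be ``congruent'' across $k$, so that the complementary part of $\x_i$ admits a single common encoding; it holds in the structured setting where $\z_i=(\z_c,\z_u^i)$ with $\z_c,\z_u^1,\z_u^2$ mutually independent (possibly composed with invertible observation maps), but in general one only obtains $J=\beta_u(\sum_i H(\x_i)-2I(\x_1;\z_c))+\sum_i\delta_i(\z_c)$ with a nonnegative slack $\delta_i(\z_c)$, and one must check that this slack cannot pull the optimizer off the $\Cs$ maximizer. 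A second, more routine point is making the ``expresses any function'' idealization precise --- realizing the Bayes-optimal decoder, the aggregate posterior as the prior, and the stochastic encoders attaining the above bound --- which is what turns the variational inequalities into equalities throughout the argument.
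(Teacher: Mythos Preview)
Your argument is sound and reaches the same conclusion as the paper, but the route differs in structure. The paper proceeds \emph{sequentially}: it first solves the reduced common-only problem $\min_{\z_c}\sum_i\bigl[H(\x_i\mid\z_c)+\beta_c I(\z_c;\x_i)\bigr]$ subject to the hard constraint---rewriting via $H(\x_i)=H(\x_i\mid\z_c)+I(\x_i;\z_c)$ with $\beta_c=0$ immediately identifies the stochastic-GK maximizer $\hat\z_c$---then minimizes $H(\x_i\mid\z_u^i,\hat\z_c)+\beta_u I(\z_u^i;\x_i)$ over $\z_u^i$ with $\hat\z_c$ held fixed, and finally asserts via a chain-rule inequality that replacing $\hat\z_c$ by any $\tilde\z_c$ with smaller $I(\tilde\z_c;\x_i)$ strictly worsens the full objective. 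You instead attack the \emph{joint} objective directly: you exploit the Markov structure $\z_u^i-\x_i-\z_c$ to decompose $H(\x_i\mid\z_u^i,\z_c)$ exactly, inner-minimize over $\z_u^i$ for each fixed $\z_c$, and collapse the result to the closed form $J=\beta_u\bigl(\sum_i H(\x_i)-2I(\x_1;\z_c)\bigr)$, from which the outer minimizer is immediate. Your route makes the role of $\beta_u$ explicit and avoids the detour through a separate common-only subproblem.

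You are also right that the achievability of the inner optimum---existence of $\z_u^i\perp\z_c$ with $(\z_u^i,\z_c)$ sufficient for $\x_i$, under the encoder-induced Markov constraint---is the substantive gap. The paper does not isolate this point; its final ``sequential equals joint'' step claims that for any fixed $\z_u^i$ the quantity $I(\x_i;\z_u^i,\z_c)$ is strictly larger at $\hat\z_c$ than at any deficient $\tilde\z_c$, which fails, for instance, whenever $\z_u^i$ is already sufficient for $\x_i$. So the same lacuna is present in both arguments; your treatment simply surfaces it and confines it to the structured generative setting where it is easily discharged.
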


\begin{proof} We consider the hard-constrained optimization problem with an infinitely expressive function class (i.e. so that the cross-entropy loss corresponds to the conditional entropy). Our VAE objective corresponds to the optimization problem
\begin{align*}
\min_{\z_c, \z_u^1, \z_u^2} \quad &H(\x_1|\z_u^1, \z_c) + \beta_u I(\z_u^1; \x) + \beta_c I(\z_c; \x) +\\
& H(\x_2|\z^2_u, \z_c) + \beta_u I(\z_u^2; \x) + \beta_c I(\z_c; \x)\\
\text{s.t.} \quad & D(q_{\phi_1}, q_{\phi_2}) = 0
\end{align*}

We consider a sequential optimization of finding $\hat{\z}_c$ and $\hat{\z}_u^i$, and then show that this solution minimizes the joint objective above.
We first consider the hard-constrained version of \cref{eq:common}.
\begin{align*}
\min_{\z_c} \quad & H(\x_1|\z_c) + \beta_c I(\z_c; \x_1) + \\
& H(\x_2|\z_c) + \beta_c I(\z_c; \x_2)\\
\text{s.t.} \quad & D(q_{\phi_1}, q_{\phi_2}) = 0
\end{align*}
Note that $H(\x_i) = H(\x_i|\z_c) + I(\z_c; \x_i)$. We can rewrite the loss as:
\begin{align*}
    \mathcal{L}&=H(\x_1|\hat{\z}_c) + H(\x_2|\hat{\z}_c) +  \beta_c (I(\hat{\z}_c, \x_1) + I(\hat{\z}_c, \x_2) )\\
    &=H(\x_1) + H(\x_2)  + (\beta_c-1) (I(\hat{\z}_c, \x_1) + I(\hat{\z}_c, \x_2) )
\end{align*}
This tells us that the optimal $\z_c$ maximizes $I(\hat{\z}_c, \x_1) + I(\hat{\z}_c, \x_2)$. This is exactly the definition that we give of ``stochastic'' GK common information. Note we have previously shown $I(\z_c;\x_1) = I(\z_c;\x_2)$. 
Given $\hat{\z}_c$ found above, the remaining objective (\cref{eq:cvae-full}) becomes:
\begin{align*}
\min_{\z_u^1, \z_u^2} \quad & H(\x_1|\z_u^1,\hat{\z}_c) + \beta_u I(\z_u^1; \x_1)  + \\
& H(\x_2|\z_u^2, \hat{\z}_c) + \beta_u I(\z_u^2; \x_2).
\end{align*}
For $\beta_u < 1$, the objective maximizes $I(\x_1;\z_u^1,\hat{\z}_c)$, which was the definition of the unique information.
(For $\beta_u > 1$, this corresponds to a $\beta$-VAE, and will have the corresponding trade-off between rate and reconstruction \cite{higgins2017beta, achille2018emergence, alemi2018fixing}.)

Finally, suppose $\Tilde{\z}_c$ did not contain all the common information as $\hat{\z}_c$; i.e. $I(\Tilde{\z}_c; \x_i) < I(\hat{\z}_c; \x_i$). 
Write the final equation as a maximization by noting that 
\begin{equation*}
H(\x_i|\z_u^i,\hat{\z}_c) = - I(\x_i;\z_u^i,\hat{\z}_c) + H(\x_i)
\end{equation*}
Then the final optimization (for any $i$) is equivalent to
\begin{align}
\max_{\z_u^i} \ I(\x_i;\z_u^i,\hat{\z}_c) - \beta_u I(\z_u^i; \x_i) - H(\x_i) &= \max_{\z_u^i} \ I(\x_i;\hat{\z}_c) + I(\x_i;\z_u^i|\hat{\z}_c) - \beta_u I(\z_u^i; \x_i) - H(\x_i) \label{eq:proof-max} \\
&> \max_{\z_u^i} \  I(\x_i;\Tilde{\z}_c) + I(\x_i;\z_u^i|\Tilde{\z}_c) - \beta_u I(\z_u^i; \x_i) - H(\x_i) \\
&= \max_{\z_u^i} \ I(\x_i;\z_u^i,\Tilde{\z}_c) - \beta_u I(\z_u^i; \x_i) - H(\x_i)
\end{align}
For any $\z_u^i$, Eq.~\ref{eq:proof-max} is maximized with $\hat{\z}_c$ that encodes all the common information. 
Thus the GK VAE optimization is minimized with $\hat{\z} = (\hat{\z}_u^1, \hat{\z}_u^2, \hat{\z}_c)$.
\end{proof}

\begin{proposition} (\cite{wolf2004zero}, Ex. 1): Define 
\begin{equation*}
\z_1 = (\z_c, \z_u^1), \quad \z_2 = (\z_c, \z_u^2) 
\end{equation*}
where $\z_c, \z_u^1$, and $\z_u^2$ are mutually independent. Then for any invertible transformation $t_i$ the random variable $\z^*$ that satisfies
\begin{equation*}
\text{arg} \max_{\hat{\z}} C_{GK}(t_1(\z_1), t_2(\z_2))     
\end{equation*}
is $\z_c$.
\end{proposition}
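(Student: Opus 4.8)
The plan is to reduce to the case $t_1 = t_2 = \mathrm{id}$ and then invoke the combinatorial characterization of the G\'acs--K\"orner maximal common random variable. First I would observe that $C_{GK}$ is invariant under invertible relabelings of each argument: if $t_1,t_2$ are invertible, then $z = f(t_1(\z_1)) = g(t_2(\z_2))$ holds for some deterministic $f,g$ if and only if $z = (f\circ t_1)(\z_1) = (g\circ t_2)(\z_2)$, and as $(f,g)$ ranges over all function pairs so does $(f\circ t_1, g\circ t_2)$. Hence $C_{GK}(t_1(\z_1); t_2(\z_2)) = C_{GK}(\z_1;\z_2)$ and, crucially, a random variable $z$ is a common function of $t_1(\z_1),t_2(\z_2)$ iff it is a common function of $\z_1,\z_2$; so the set of maximizers is unchanged. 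It therefore suffices to identify the maximizer $\hat\z$ in the definition of $C_{GK}(\z_1;\z_2)$ with $\z_c$ (up to bijective relabeling, which is the natural equivalence for this optimizer).

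Next I would recall the standard structural fact (G\'acs--K\"orner; see also \cite{wolf2004zero}): for discrete $(X,Y)$, form the bipartite co-occurrence graph on $\mathrm{supp}(X)\sqcup\mathrm{supp}(Y)$ with an edge between $x$ and $y$ whenever $p(x,y)>0$; then every common random variable $z=f(X)=g(Y)$ is constant on each connected component, the component-index random variable is itself a common random variable, and it is the (essentially unique) maximizer defining $C_{GK}(X;Y)$, with $C_{GK}(X;Y)$ equal to its entropy. I would either cite this or include the short argument: $f$ and $g$ must agree across every edge, hence along every path, hence on every component; the component index is a function of $X$ alone and of $Y$ alone, so it is itself a valid common variable; and any common $z$ factors through it, giving $H(z)\le H(\text{component index})$ with equality iff $z$ is a bijective function of it.

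Then I would compute the components for our generative model. By mutual independence of $\z_c,\z_u^1,\z_u^2$, the support of $(\z_1,\z_2)$ is exactly the set of pairs $\big((c,u^1),(c',u^2)\big)$ with $c=c'$, $c\in\mathrm{supp}(\z_c)$, and $u^i\in\mathrm{supp}(\z_u^i)$. So $(c,u^1)$ is joined to $(c',u^2)$ iff $c=c'$: for a fixed value $c$ of $\z_c$ the vertices $\{(c,u^1)\}$ and $\{(c,u^2)\}$ form a complete bipartite (hence connected) piece, and distinct values of $\z_c$ give vertex-disjoint pieces with no edges between them. Therefore the connected components are in bijection with $\mathrm{supp}(\z_c)$, and the component-index random variable is precisely $\z_c$. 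Combining with the previous step: $\z_c$ is a common random variable (the projection of both $\z_1$ and $\z_2$), $C_{GK}(\z_1;\z_2)=H(\z_c)$, and any common $z$ satisfies $H(z)\le H(\z_c)$ with equality only when $z$ is a relabeling of $\z_c$; by Step 1 the same holds after applying the invertible $t_1,t_2$, which proves that $\z_c$ maximizes $C_{GK}(t_1(\z_1);t_2(\z_2))$.

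The main obstacle is Step 2: correctly stating (or citing) the graph-theoretic characterization of the GK maximal common variable and being careful that the optimizer is unique only up to bijective relabeling — and, relatedly, checking that mutual independence genuinely yields the product-support statement used in Step 3 (one may first restrict attention to the supports so that all three factors are ``full'', which changes neither any entropy nor the set of maximizers). Everything else is bookkeeping.
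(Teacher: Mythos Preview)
Your proposal is correct and follows the same overall strategy as the paper: reduce to the identity case by observing that invertible $t_i$ induce a bijection between admissible function pairs $(f,g)$ and $(f\circ t_1, g\circ t_2)$, so that $C_{GK}$ and its set of maximizers are unchanged by the $t_i$. The paper's proof stops there and simply asserts (via the reference to \cite{wolf2004zero}) that the identity case yields $\z_c$; your Steps~2--3 supply what the paper omits, namely the connected-component characterization of the GK maximal common variable and its explicit computation for the product-support structure induced by mutual independence, which is a welcome (and correct) addition.
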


\begin{proof}

Note that if $t$ is the identity transformation $t(z) = z$, then
\begin{equation*}
\text{arg} \max_{\hat{\z}} C_{GK}(\z_1, \z_2)     
\end{equation*}
is $\z_c$. In general, if $t_i$ are invertible transformations, suppose that $f_1$ and $f_2$ are the functions satisfying $\hat{\z} = f_1(\z_1) = f_2(\z_2)$ corresponding to $C_{GK}(\z_1, \z_2)$. Then the functions corresponding to $C_{GK}(t_1(\z_1), t_2(\z_2))$ will be $\hat{\z} = f_1 \circ t_1^{-1} (t_1(\z_1)) = f_2\circ t_2^{-1}(t_2(\z_2))$ and the random variable $\hat{\z}$ is equivalent.

\end{proof}

\section{Experimental Details}
We trained networks with Adam with a learning rate of $0.001$, unless otherwise stated. When the number of ground truth latent factors is known, we set the number of latents equal to the number of ground truth factors. To improve optimization, we use the idea of free bits \cite{kingma2016improved} and we set $\lambda_{free-bits} = 0.1$. This was easier than using $\beta$ scheduling, since it only involved one parameter $\lambda_{free-bits}$. We set $\beta_u$ to be $10$ and $\beta_c$ to be $0.1$. We trained networks for $70$ epochs, except for the Mnist experiments, where we trained for $50$ epochs. We used a batch size of $128$ and we set $\lambda_c = 0.1$. For all our experiments we used the same encoders and decoders as \cite{burgess2018understanding}, which has been also used in recent work \cite{locatello2019challenging}. We used Gaussian encoders and Bernoulli decoders, basing our implementation off \cite{dubois2019dvae}. Our architecture is schematized in Fig.~\ref{fig:schematic_full}. Note that we optimized encoders and decoders separately between the views (i.e weights were not shared). 

To ensure that the latents are shared to both encoders, during training we randomly sample $\z$ from either encoder $\qiphi$  with $p = 0.5$. We opted to randomly sample the latents from each encoder, as opposed to performing averaging, to ensure that the latent will always be a function of an individual view $\x_i$. This is in addition to the soft constraint governed by $\lambda_c$ in the loss.

In our implementation, unless otherwise noted, we used two separate view-dependent decoders that each processed all the latents $\z$ to reconstruct the observations. We also verified that view specific decoders that processed only view-specific latents $\z_i$, as in Thm.~\ref{thm:GKVAE}, also led to a partitioning of the common and unique information Fig.~\ref{fig:traversals-dci-partitioned-dec}. 
This was the case both when the unique factors were pairwise independent, as in generative model described in Sect.~\ref{sec:datasets} used for the majority of our experiments, and also if there were correlations between the unique components (Fig.~\ref{fig:shapes-corr-partitioned-dec}) and (Fig.~\ref{fig:shapes-corr-full-dec}), with the experiments described in Sect.~\ref{app:add-exps}. Empirically, we found the separation of the common and unique information was robust to the partitioning of the latents fed to the decoder of our GK-VAE. We provide both decoder implementations in our repository available at \codelink.

To quantify the information contained in the representation, we calculate the usable information (in bits), which is a lower bound to the information contained in the representation \cite{Xu2020theory, kleinman2020usable}. To train our decoder, we used the \texttt{GradientBoostingClassifier} from \texttt{sklearn} with default parameters. We trained on $8000$ samples and tested on $2000$. We evaluated the information on a held-out test set, and hence the negative values correspond to overfitting on the training set. In Table~\ref{tab:shapes}, the numbers in parentheses correspond to the number of ground truth factors. We used the same setup for the \emph{rotated Mnist} experiments (Fig.~\ref{fig:mnist}). For the rotation angle, we discretized the angle of rotation ($-45^{\circ}$, $45^{\circ}$) into $10$ bins of equal size, and predicted the discrete bin. We predicted the rotation angle applied to the first view. When comparing with a constrastive learning approach (Fig.~\ref{fig:mnist}, right), we used the same encoder backbone as our GK-VAE\footnote{We used the code from: https://github.com/HobbitLong/CMC}. We pre-trained with a batch size of $256$ for $60$ epochs with a learning rate of $0.001$, and then trained the linear classifier for $10$ epochs with an initial learning rate of $0.03$. We used a latent dimension of $20$. 

\begin{figure*}[t!]
    \centering
    \includegraphics[width=0.25\textwidth]{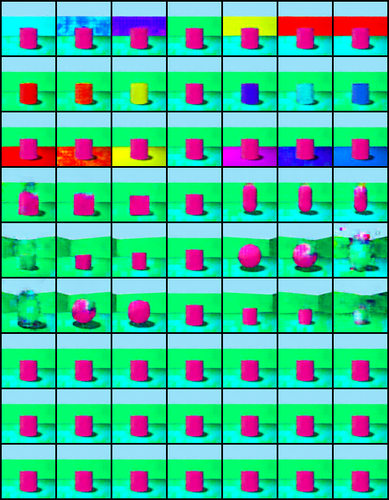}
    \includegraphics[width=0.2\textwidth]{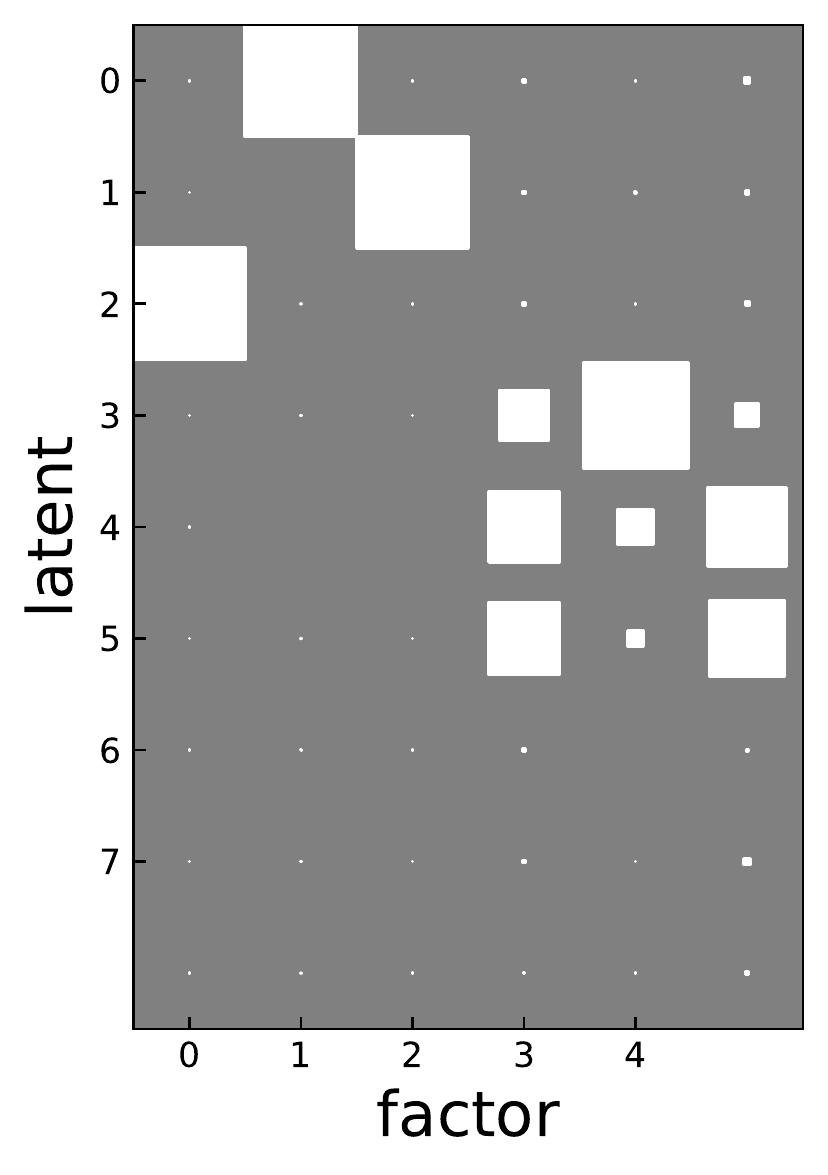}
    \hspace{1em}
    \includegraphics[width=0.3\textwidth]{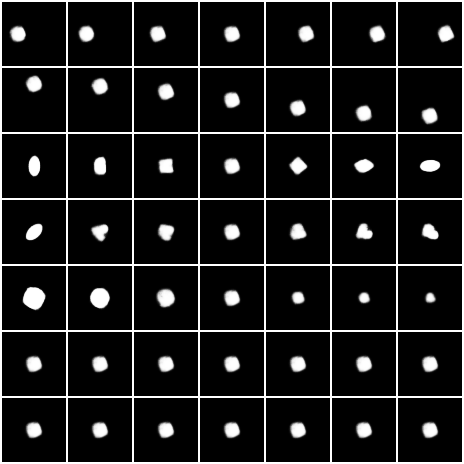}
    \includegraphics[width=0.2\textwidth]{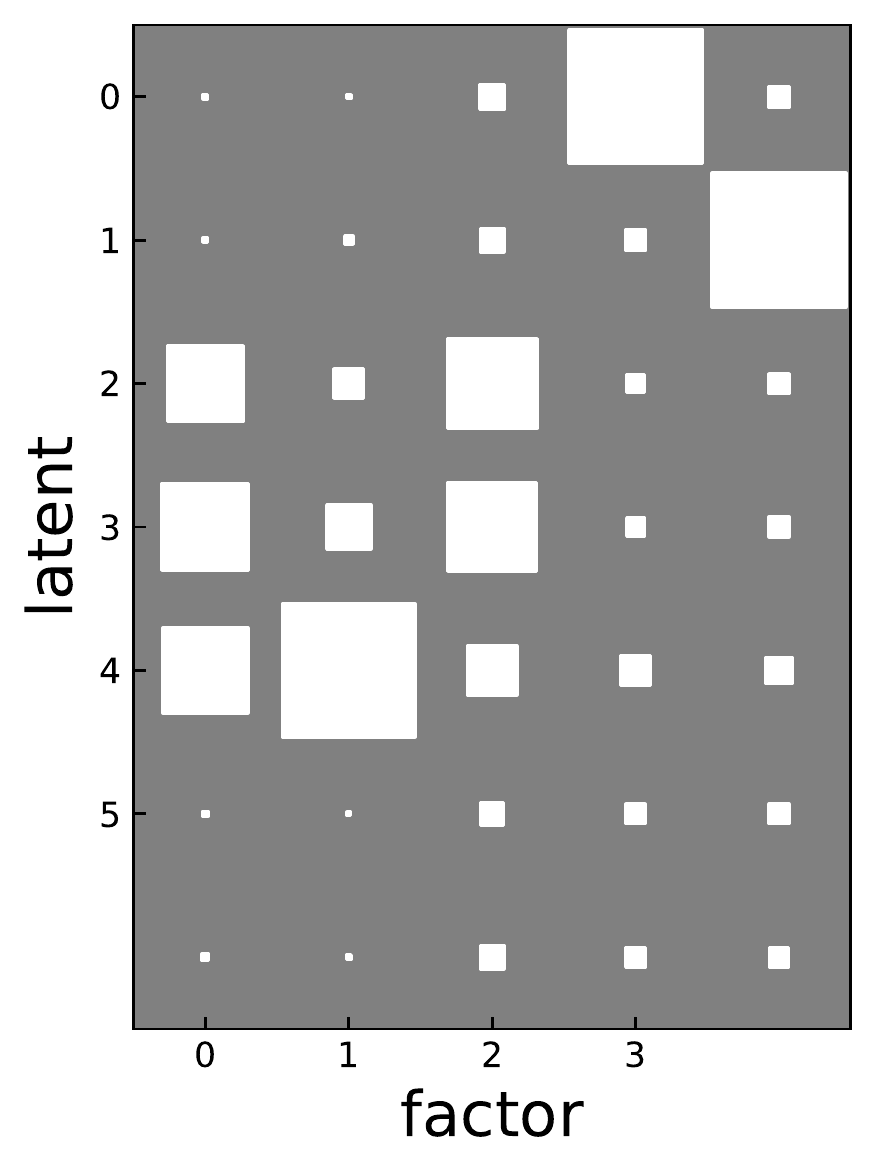}
    \caption{\textbf{Latent traversals and DCI plots show optimization results in separation of common and unique information when using partitioned decoder.} Same conventions as Fig.~\ref{fig:traversals-dci}. 
    \textbf{(Left) 3dshapes:} The top 3 rows shows the unique latents, the middle 3 the common (and the bottom 3 are the unique latents for the second view). 
    The ground truth unique generative factors are $(0,1,2)$ corresponding to floor color, wall color, and object color. Our model correctly recovers that those factors are unique (first three rows in the figure), and that the other factors are common (middle three rows).
    \textbf{(Right) dsprites:} 
    The top 2 rows shows the unique latent variables, the middle 3 the common (and the bottom 2 are the unique latent variables for the second view). 
    The ground truth unique generative factors are $(3,4)$ corresponding to x-position, y-position respectively. Our model correctly recovers that those factors are unique (first two rows in the figure), and that the other factors are common (middle three rows).}
    \label{fig:traversals-dci-partitioned-dec}
\end{figure*}

\subsection{DCI Plots and Latent Traversals:}

Let $d$ be the dimension of the representation $\z$ and let $\tb$ be the true generating factors. The idea is to  train a regressor $f_j(\z): \R^d \to \R$ to predict the ground truth factors $t_j$ for each $j$ from the representation $\z$. This results in a matrix of coefficients that describe the importance of each component of the representation for predicting each ground truth factors. This matrix $R$ is the \emph{importance matrix} that we visualize in the paper, where $R_{ij}$ reflects the relative importance of of $z_i$ for predicting $t_j$ (where $z_i$ refer to the $i^{th}$ component of the representation). We used the \texttt{GradientBoostingClassifier} from \texttt{sklearn} with default parameters, similar to \cite{locatello2019challenging} to predict the ground truth factors.

We visualize the latent components by traversing one latent variable at a time, while keeping the others fixed. We plot traversals of the prior $p(z)$ and the posterior $q(z|x)$ in different figures (traversals of the prior are in Fig.~\ref{fig:traversals-dci}, Fig.~\ref{fig:mnist}, and Fig.~\ref{fig:traversals-dci-partitioned-dec}, while the others are posterior traversals). We obtain the posterior traversals by encoding an observation through $q(z|x)$ and traversing each component of the representation.

\section{Other Related Work}
\label{app:related-work}

A similar formulation has been used for multi-view learning \cite{wang2015multi}, with two separate autoencoders, with an constraint that each latent representation be similar, with the similarity measured by the canonical correlation of the latent representations. They did not motivate it from an information-theoretic perspective; and rather empirically found that such an optimization  lead to good representations in the multi-view setting.

Also related to our work is \cite{salamatian2020approximate}. \cite{salamatian2020approximate} defined the approximate G\'acs-K\"orner information in the following manner: 

\begin{equation}
\begin{aligned}
\max_{z} \quad & I(x_1;z) \\
\textrm{s.t.} \quad & H(z|x_2) < \delta \\
& z \biconditional x_1 \biconditional x_2 \\
\end{aligned}
\end{equation}

By showing that they could perform the optimization over deterministic functions $f$ such that $z = f(x_1)$, they formed a Lagrangian corresponding to:

\begin{equation}
\begin{aligned}
\max_{f} \quad & H(f(x_1)) - \lambda H(f(x_1)|x_2)
\end{aligned}
\end{equation}

They noted that the above optimization is difficult to perform and that future work should look into avenues for computing this quantity; indeed it looks difficult to learn the function $f$ from the above optimization problem. They also suggested that this approximate form of the G\'acs-K\"orner common information had potential applications in terms of compression, since the (approximate) common information only needs to be represented once.

\subsection{Relationship to redundant information in the Partial Information Decomposition}

Our approach also relates to approaches that aim understand how the information that a set of sources contain about a target variable is distributed among the sources. In particular, \cite{williams2010nonnegative} proposed the Partial Information Decomposition (PID), which decomposes the information that two sources $X_1$ and $X_2$ contain about a target variable $Y$ into a the components that are \emph{unique}, \emph{redundant}, and \emph{synergistic}. A central quantity in this decomposition, the \emph{redundant information}, reflects the shared information about a \emph{target} variable. The Gacs-Korner common information is equivalent to existing definitions redundant information if the target is reconstructing the sources (i.e, $Y = (X_1, X_2))$ \cite{kolchinsky2022novel}. We note that computing the redundant information from high dimensional samples has been challenging. Recently \cite{kleinman2021redundant} proposed an approach that could be applied on high dimensional \emph{sources} but where the \emph{target} was low dimensional. Here, our approximation of the common information reflects a further step which can be applied on high dimensional samples (and targets).

\subsection{Other multi-view representation learning approaches for partitioning common/unique information}

Our work relates to a growing body of multi-view representation learning approaches aiming to disentangle common and unique information from grouped data. \cite{bouchacourt2018multi} examine the setting of extracting a common content and variable style from a set of grouped images based on content. \cite{jha2018disentangling, vowels2020nestedvae, sanchez2020learning} are also similar in spirit, aiming to disentangle the shared and unique component between paired data.  

These existing methods appear to focus on qualitatively partitioning the information through the use of different objectives, however these approaches lack a well-defined notion of common (and unique) information, making it difficult to prove theorectical guarantees for the recovery of the common/unique component, or enable quantification of the information contained in the representation. In contrast, our setup allows us to formalize the decomposition of information using a well-defined information theoretic notion of common/unique information and we show both formally and empirically that we can optimize the objective with a simple modification to a traditional VAE setup and that we can quantify the amount of information contained in the representation.

\section{Limitations of our approach}

To validate our approach, we focused on the simpler setting where we have paired data, however, we could extend our formulation to find common information between $n > 2$ sources, as well as finding common information between subsets of sources. While our approach can be naturally extended to find the common information between $n$ sources, future work could investigate a scalable approach to identify common and unique information between arbitrary subsets of the sources. Additionally, to validate our approach empirically, we focused on using a convolutional encoder on relatively small images and video frames, but our formulation is general and the encoder could be interchanged depending on the complexity and inductive biases of the task and data.

\section{Differences between GK Common Information and Mutual Information}
\label{app:diff-gk-mi}

The following examples highlight the  difference between GK information and mutual information, and suggest why using GK information may be desirable in practical settings.

\textbf{Case 1:}
Let $X_1 = C + N_1$ and $X_2 = C + N_2$, where $C$ and $N_i$ are independent variables, but the noise $N_1$ and $N_2$ are correlated. 
Suppose C is discrete -- for example, a 0 or 1 feature -- and the variance of the noise $N_i$ is relatively small ($\|\Sigma_1\| \ll 1$). In this case, since $C$ can be recovered deterministically from $C + N_i$ (for example by thresholding its value) the GK common information between $X_1$ and $X_2$ is 1 bit. On the other hand, the mutual information between $X_1$ and $X_2$ -- which is given by $I(X_1, X_2) = 1 + 0.5 \log (\frac{\det{\Sigma_1}\det{\Sigma_2}}{\det{\Sigma}})$ -- can be made arbitrarily large by increasing the correlation between $N_1$ and $N_2$, i.e., making $\det{\Sigma} \to 0$. Hence, the the value of MI is unrelated to the actual semantic information shared between the two variables.
In contrast, the GK common information only identifies the common component $C$. In a sense, the GK is able to recover the underlying ``discrete'' or ``symbolic'' information that is common between two different continuous sources. This behavior is valuable in settings like neuroscience, where one wants to measure what is robustly encoded by both representations and not just how much they are correlated (which could be due to any amount of spurious factors). Additionally, with our relaxation of the problem we can measure how much ``almost'' discrete information is  encoded in both, which makes the method more applicable to real-world cases.

\textbf{Case 2:}
Let  $X_1 = f_1(C; U_1)$ and $X_2 = f_2(C; U_2$) where $C$ and $U_i$ are independent but $U_1$ and $U_2$ are correlated. Suppose $f_1$ is an invertible data generating function mapping latent factors to observations (such as pixels).
By maximizing mutual information between views, one implicitly measures those correlations (based on a similar computation to above in Case 1), whereas in the GK sense one cares about only the fully shared components between views. These different perspectives can offer different utility depending on the use case.

We show in Fig.~\ref{fig:shapes-corr-full-dec} and Fig.~\ref{fig:shapes-corr-partitioned-dec} that we can recover the partitioning of common and unique factors in practice from the generative model in Case 2.

\begin{figure}[]
    \centering
    \includegraphics[width=0.3\textwidth]{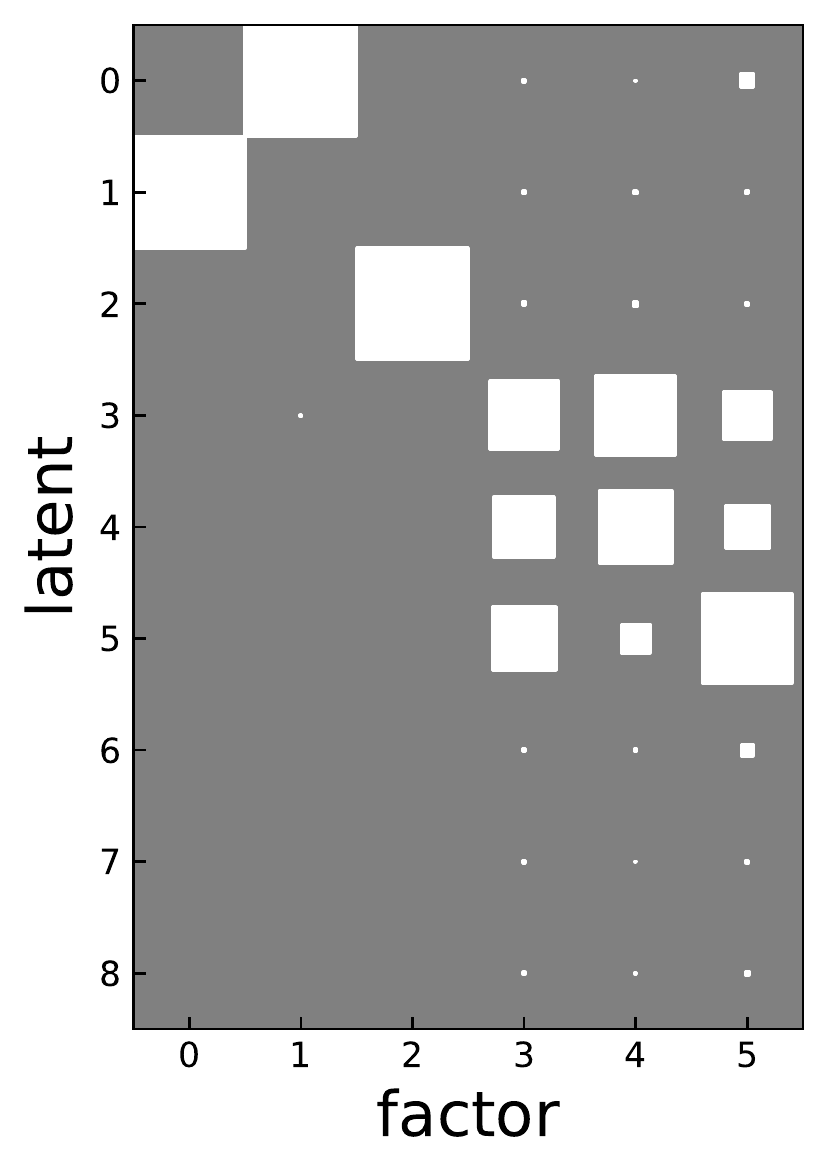}
    \includegraphics[width=0.3\textwidth]{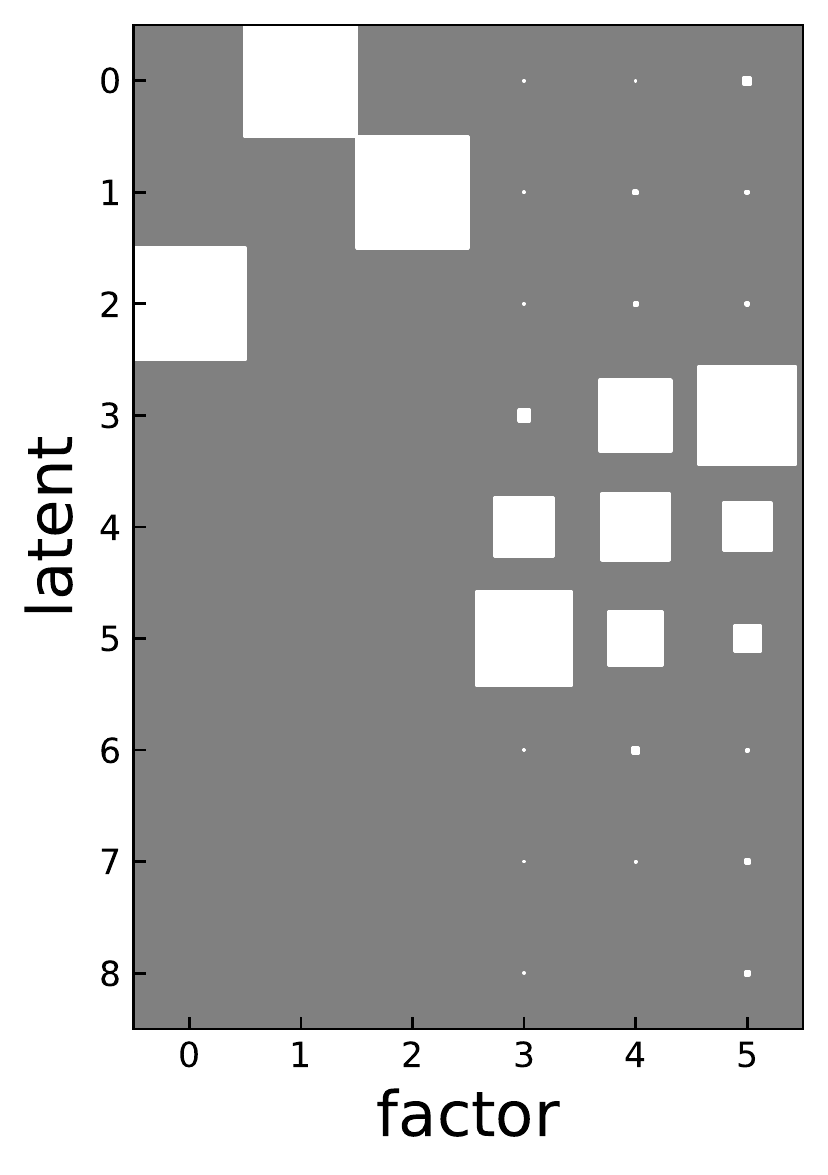}
    \includegraphics[width=0.3\textwidth]{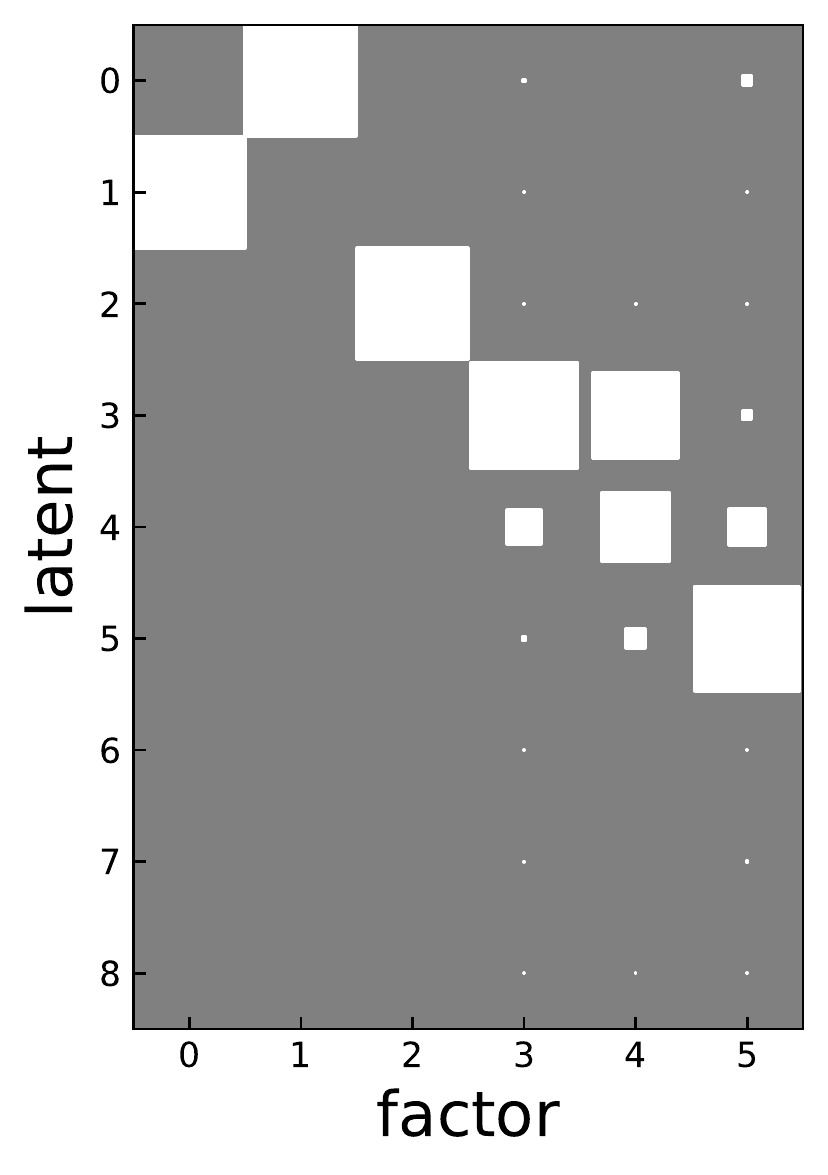}
    \includegraphics[width=0.3\textwidth]{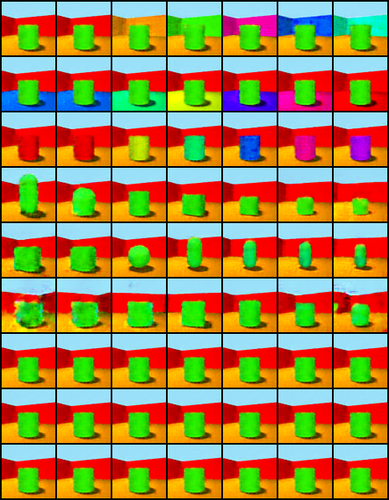}
    \includegraphics[width=0.3\textwidth]{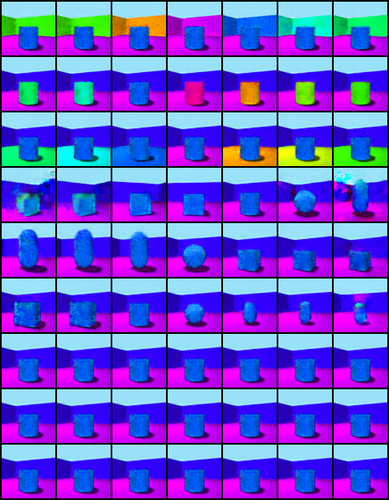}
    \includegraphics[width=0.3\textwidth]{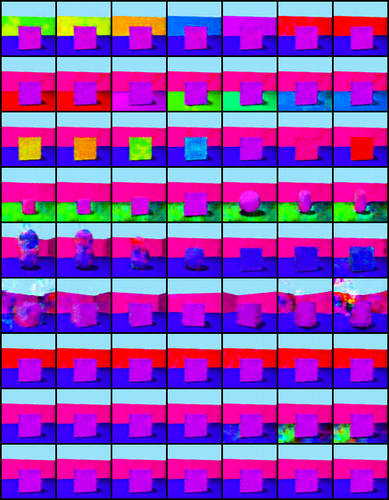}
    \caption{\textbf{Decoder from \emph{all latents} partitions common and unique information even when unique generating factors are dependent}. We modified the generative model from Eq.~\ref{eq:generative} so that the common and unique generating variables were independent but there were correlations between the unique components. In particular, the latent factors for the second view were conditionally dependent on the unique latent factors. We still observed a partitioning of the learned common and unique latent variables, as in the paper. The top 3 rows shows the unique factors, the middle 3 the common (and the bottom 3 are the unique factors for the second view). 
    The ground truth unique generative factors are $(0,1,2)$ corresponding to floor color, wall color, and object color. Our model correctly recovers that those factors are unique (first three rows in the figure), and that the other factors are common (middle three rows).
    }
    \label{fig:shapes-corr-full-dec}
\end{figure}

\section{Additional Experiments}
\label{app:add-exps}

In Table~\ref{tab:dsprite}, we compute the information encoded in the common and unique latent components for the \emph{common-dsprites} experiment described in the main text, with corresponding DCI matrix and latent traversals in Fig.~\ref{fig:traversals-dci}. We also report additional runs for the \emph{common-dsprites} and \emph{common-3dshapes} experiments in Fig.~\ref{fig:dsprite-additional} and Fig.~\ref{fig:shapes-additional} respectively. These additional runs are consistent with what was reported in the text, separating the common and unique factors. 

To introduce correlations between the unique components (Case 2 in App.~\ref{app:diff-gk-mi}) we modified the generative model from Eq.~\ref{eq:generative} where the generative model for the data $(\x_1, \x_2)$ was
\begin{equation}
\label{eq:generative-correlated}
\x_1 = f (\z_{u}^1, \z_c), \quad  
\x_2 = f (\z_{u}^2, \z_c),  
\end{equation}
where $\z_c$ is shared between the views and $\z_{u}^i$ is the unique information encoded in the $i^{th}$ view and $f$ corresponds to a rendering function. To allow the unique components to be dependent (but not the same as the common latent component $\z_c$), we modified the generative model so that  $\z_u^2$ was conditionally dependent on the value of $\z_u^1$. We implemented this constraint component-wise for each component of $\z_u^i$ by reducing the space of possible latent values of the corresponding components of $\z_u^2$ (to half, floored if odd) depending on the component-wise values of $\z_u^1$. We show in Fig.~\ref{fig:shapes-corr-full-dec} and Fig.~\ref{fig:shapes-corr-partitioned-dec} that we can recover the partitioning of common and unique factors in practice from this generative model.

In addition to the experiments described in the main text, we report variants of \emph{common-dsprites} and \emph{common-3dshapes}. In particular, we change the set of ground-truth common and latent factors.

For the \emph{common-3dshapes}, we specified that the viewpoint was the unique latent variable $\z_u$, whereas the other latent variables (backgroud color, floor color, object color, shape, size) were common to both views. We show the DCI matrix and the traversals in Fig~\ref{fig:shapes-view}. For the \emph{common-dsprites} variant, we set the unique components to be the size, scale, and orientation, and the common latent factors to be the x and y position. We show the DCI matrix and latent traversals in Fig.~\ref{fig:dsprite-common-pos}. The common and unique latent variables from our optimization separated these ground truth factors.

\clearpage
\begin{figure}[]
    \centering
    \includegraphics[width=0.3\textwidth]{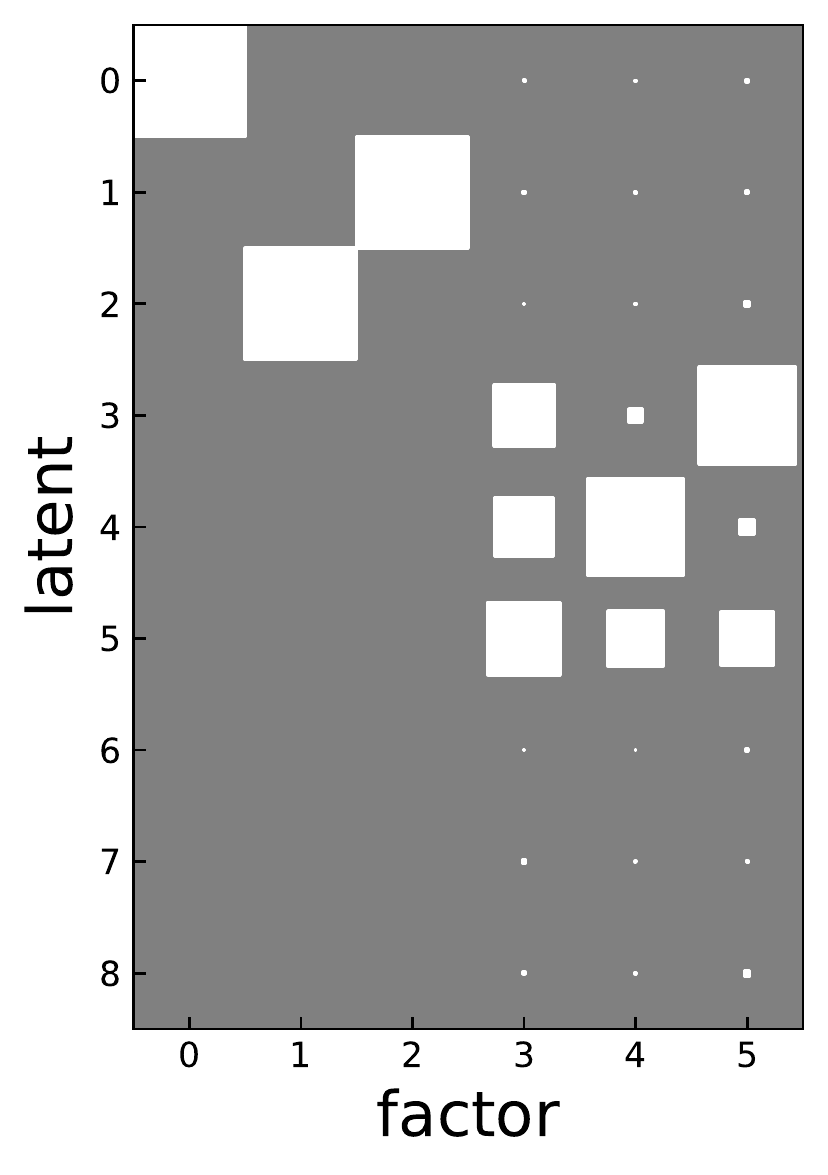}
    \includegraphics[width=0.3\textwidth]{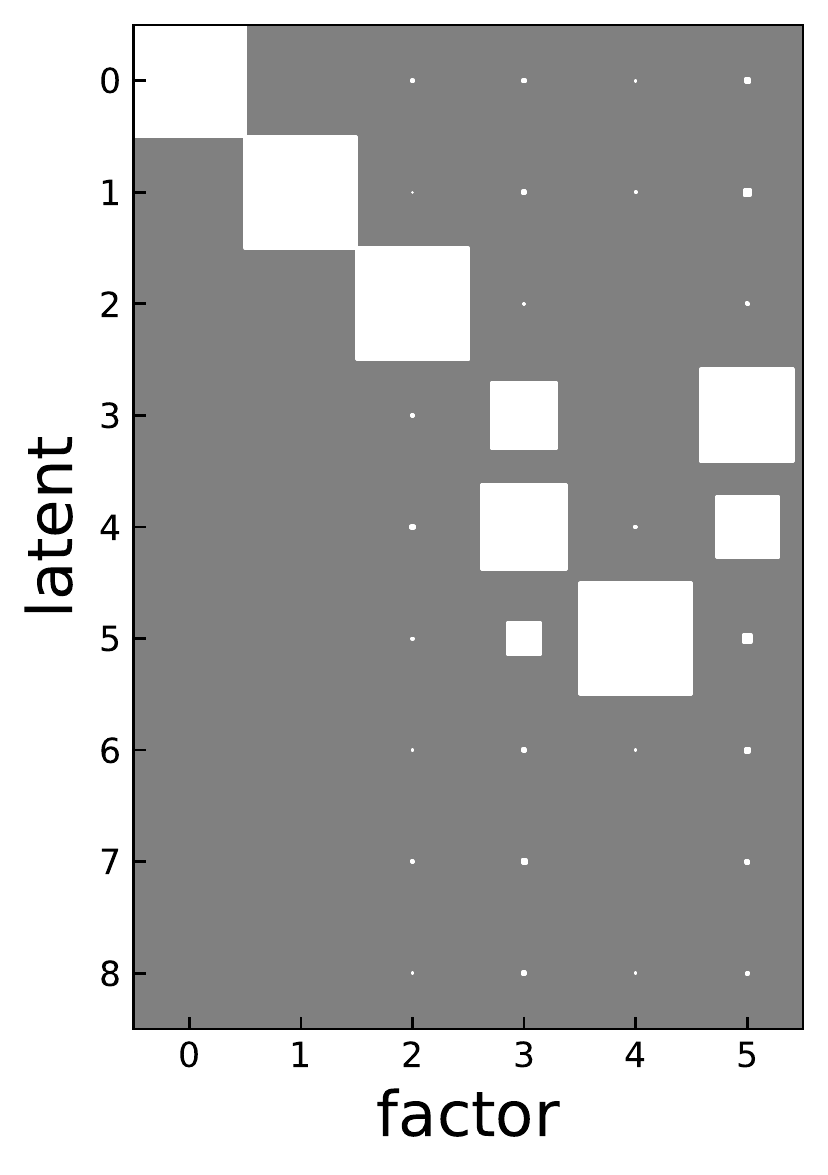}
    \includegraphics[width=0.3\textwidth]{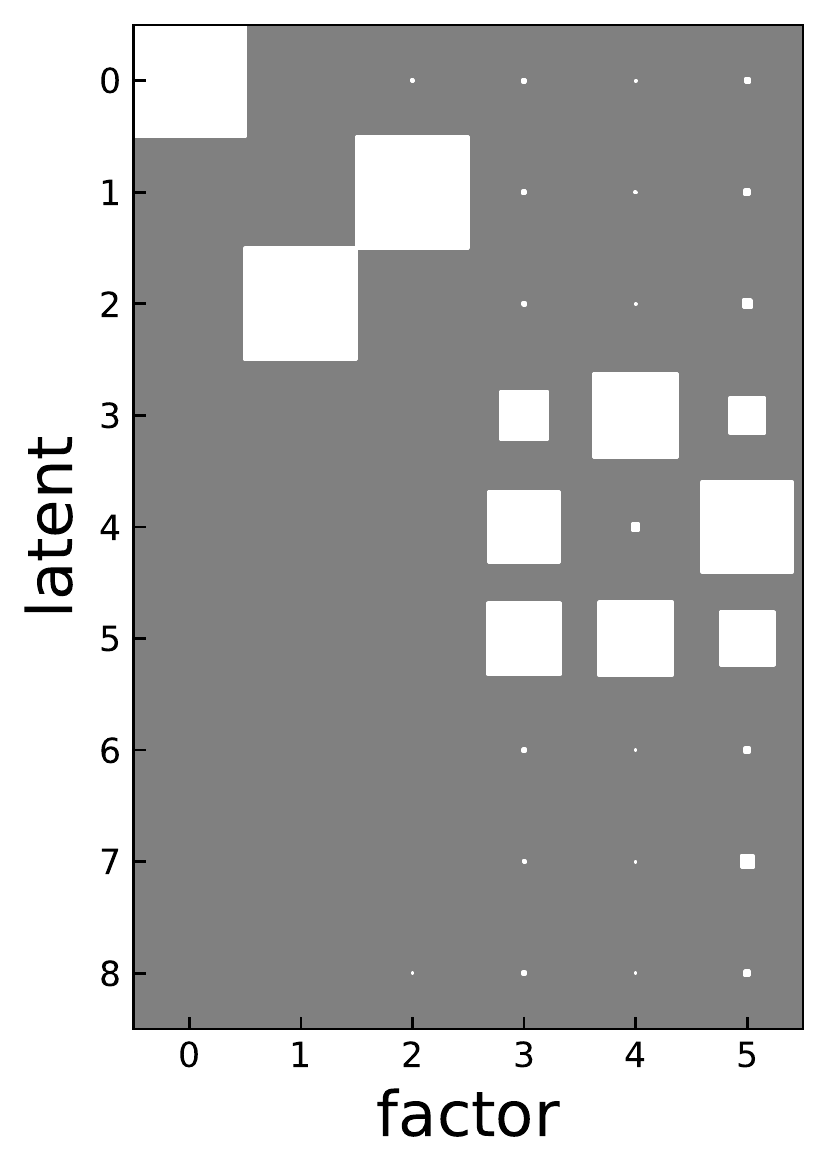}
    \includegraphics[width=0.3\textwidth]{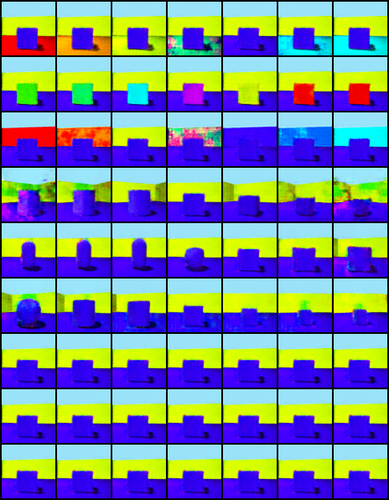}
    \includegraphics[width=0.3\textwidth]{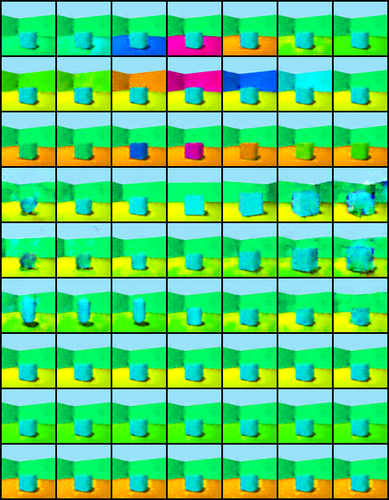}
    \includegraphics[width=0.3\textwidth]{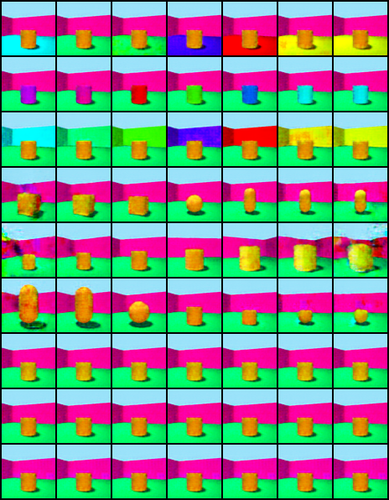}
    \caption{\textbf{Decoder from \emph{partitioned latents} also partitions common and unique information, similar to when decoding from all latents in Fig.~\ref{fig:shapes-corr-full-dec}}.
    The top 3 rows shows the unique factors, the middle 3 the common (and the bottom 3 are the unique factors for the second view). 
    The ground truth unique generative factors are $(0,1,2)$ corresponding to floor color, wall color, and object color. Our model correctly recovers that those factors are unique (first three rows in the figure), and that the other factors are common (middle three rows).
    }
    \label{fig:shapes-corr-partitioned-dec}
\end{figure}

\clearpage
\begin{figure}[]
    \centering
    \includegraphics[width=0.3\textwidth]{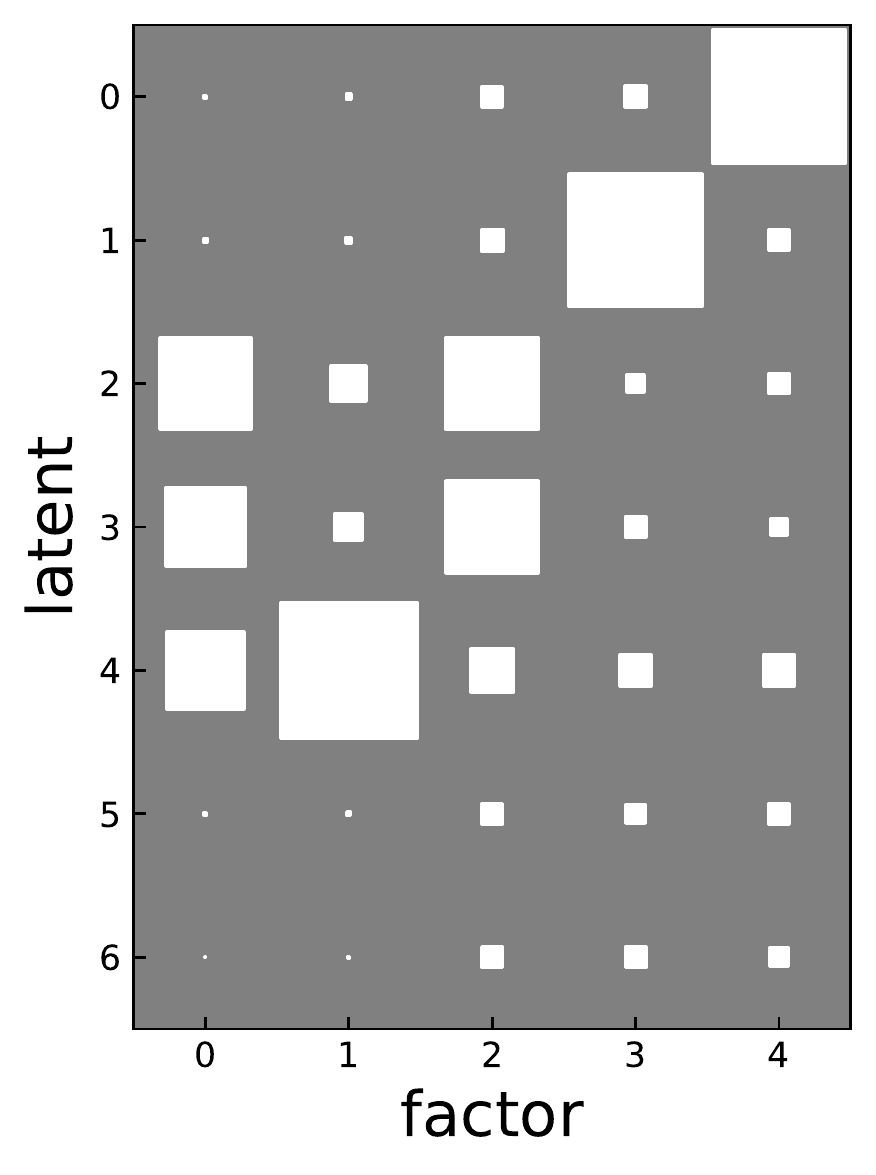}
    \includegraphics[width=0.3\textwidth]{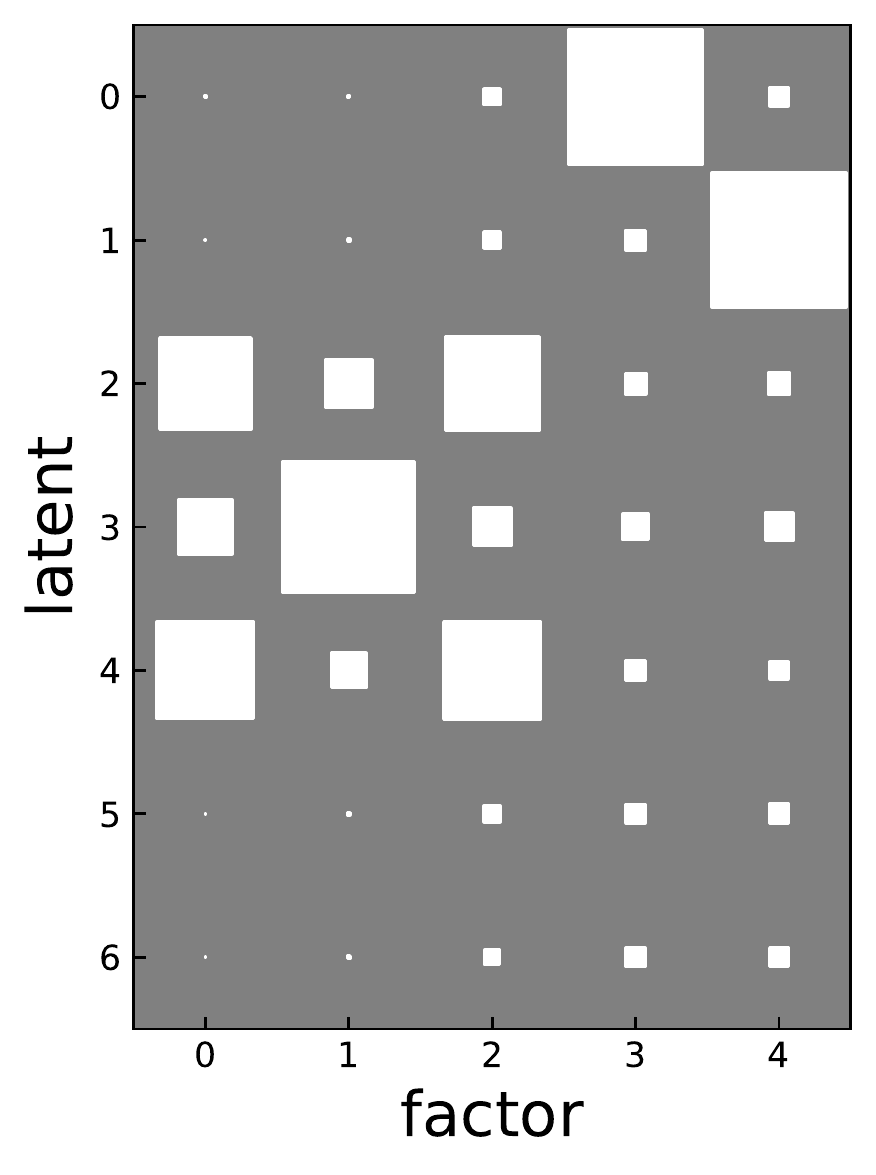}
    \includegraphics[width=0.3\textwidth]{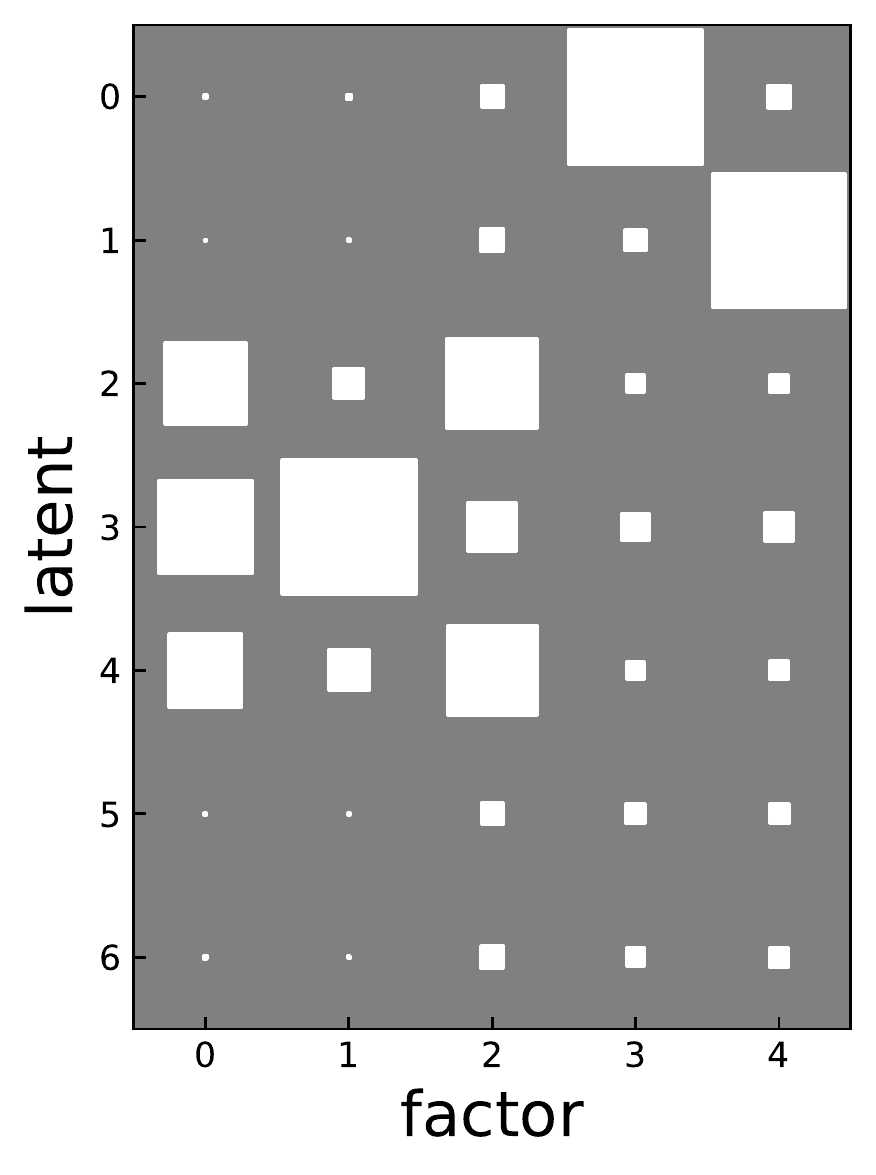}
    \includegraphics[width=0.3\textwidth]{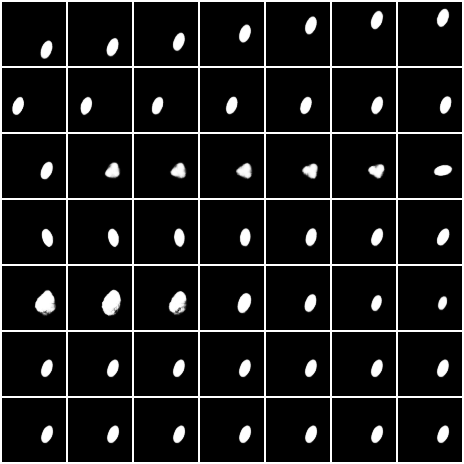}
    \includegraphics[width=0.3\textwidth]{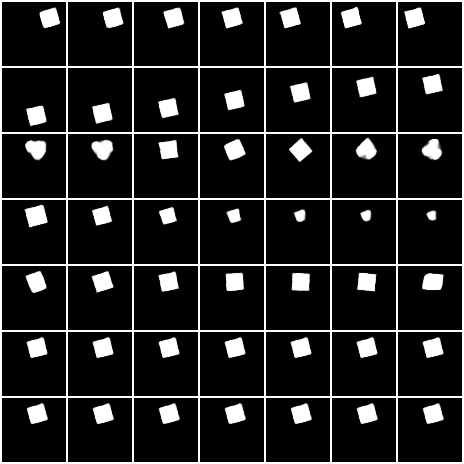}
    \includegraphics[width=0.3\textwidth]{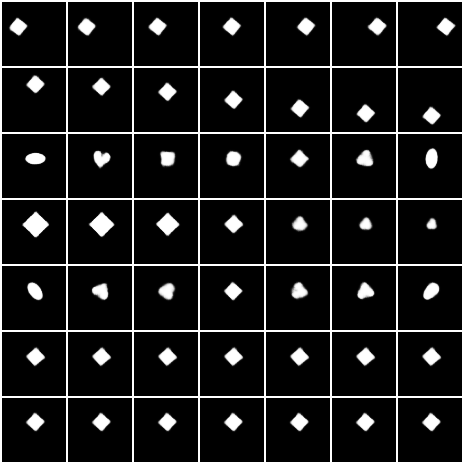}
    \caption{Additional runs for the same experiment as Fig.~\ref{fig:traversals-dci} (right) for \emph{common-dsprites}, with the same conventions as Fig~\ref{fig:traversals-dci}. These runs also show the same blockwise partitioning of common and unique information, as in the paper.}
    \label{fig:dsprite-additional}
\end{figure}

\begin{table}[ht!]
\vskip 0.15in
\begin{center}
\begin{small}
\begin{sc}
\begin{tabular}{lcccccr}
\toprule
 & Shape (3) & Scale (6) & Angle (40) & X-Pos (32) & Y-Pos(32) & KL Total\\
\midrule
Common    & 1.54 & 2.45 & 2.88 & -0.33 & -0.29 & 9.57\\
Unique & 0.08 & 0.03 & -0.5 & 3.58 & 3.63 & 9\\
Total    & 1.54 & 2.45 & 2.76 & 3.68 & 3.69 & 18.57 \\
\bottomrule
\end{tabular}
\end{sc}
\end{small}
\end{center}
\caption{
Usable Information (in bits) in representation for a \emph{dsprite} experiment. The common information is
separated from the unique information. The ground truth unique factors (x-position and y position) were almost perfectly encoded in the corresponding latents 
latents, and the other factors (shape, scale, and angle) were correctly encoded in the common latent variables. The numbers in parenthesis represents the number of discrete factors for each latent variable.} 
\label{tab:dsprite}
\vskip -0.1in
\end{table}

\begin{figure*}[]
    \centering
    \includegraphics[width=0.3\textwidth]{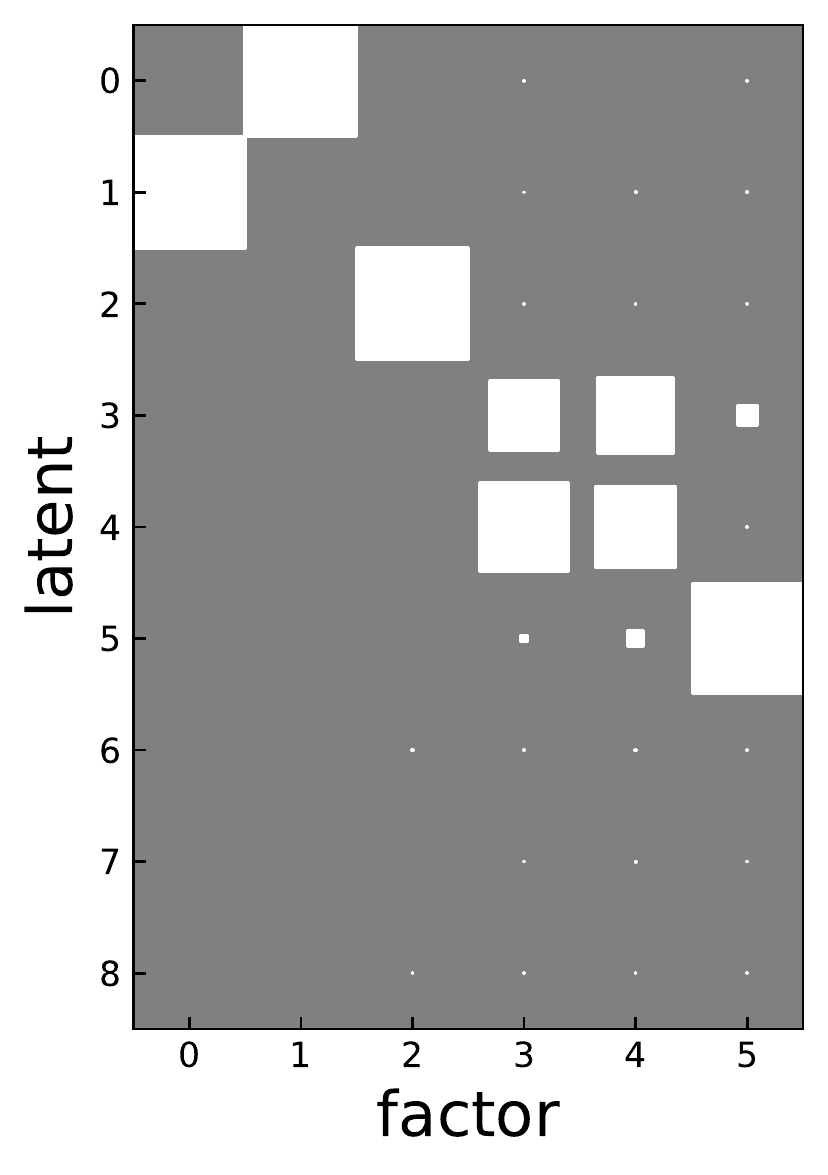}
    \includegraphics[width=0.3\textwidth]{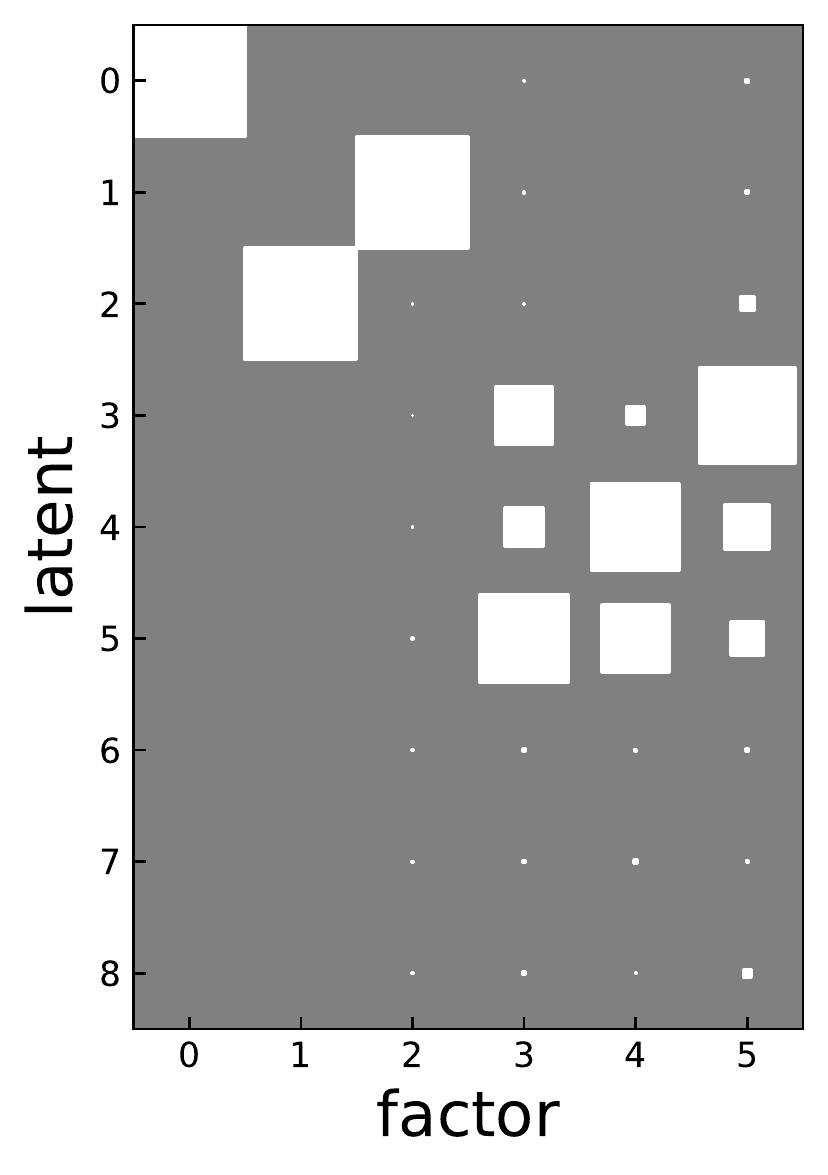}
    \includegraphics[width=0.3\textwidth]{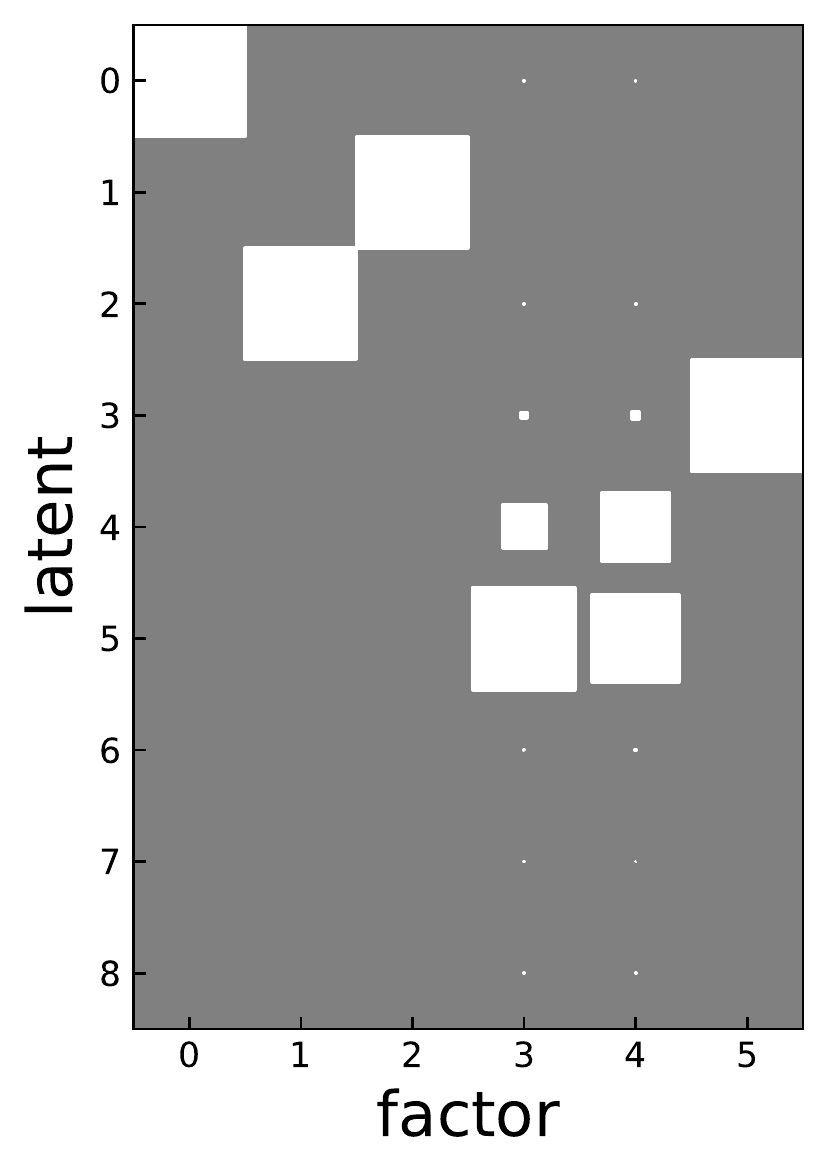}
    \includegraphics[width=0.3\textwidth]{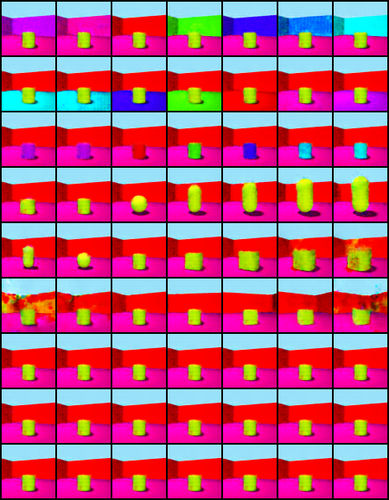}
    \includegraphics[width=0.3\textwidth]{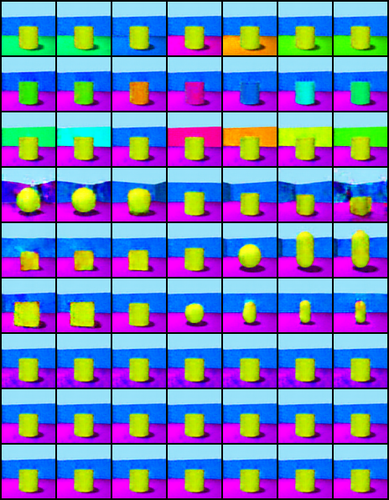}
    \includegraphics[width=0.3\textwidth]{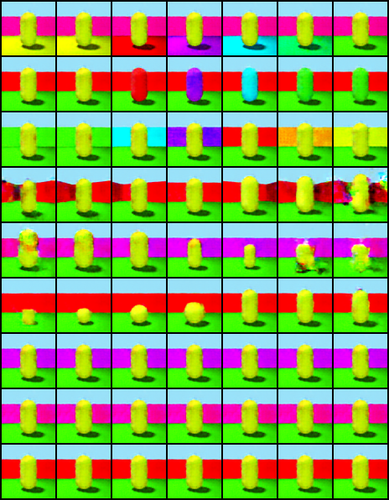}
    \caption{Additional runs for the same experiment as Fig.~\ref{fig:traversals-dci} (left) for \emph{common-3dshapes}, with the same conventions. These runs also show the same blockwise partitioning of common and unique information, as in the paper.}
    \label{fig:shapes-additional}
\end{figure*}

\begin{figure*}[]
    \centering
    \includegraphics[width=0.3\textwidth]{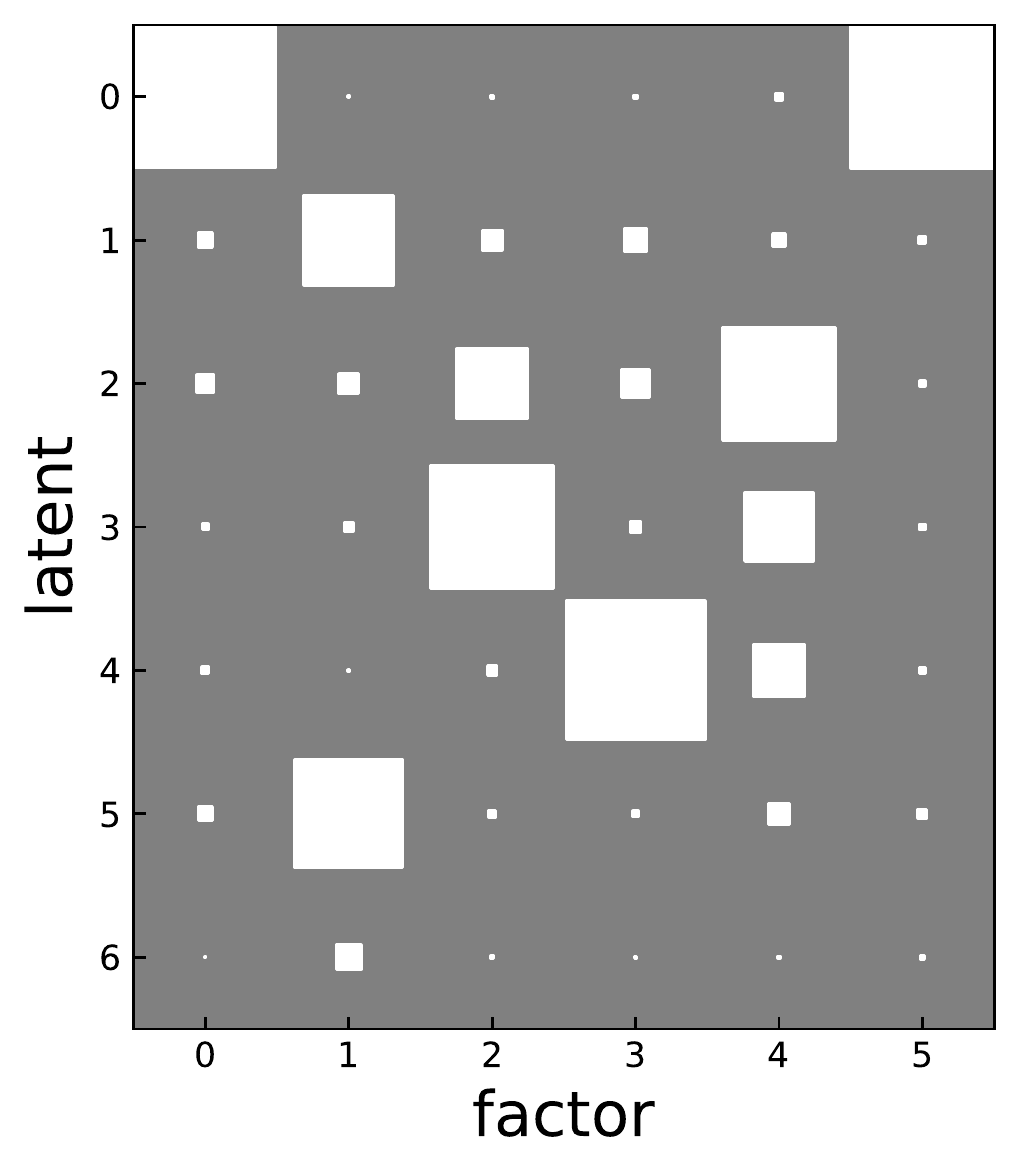}
    \includegraphics[width=0.3\textwidth]{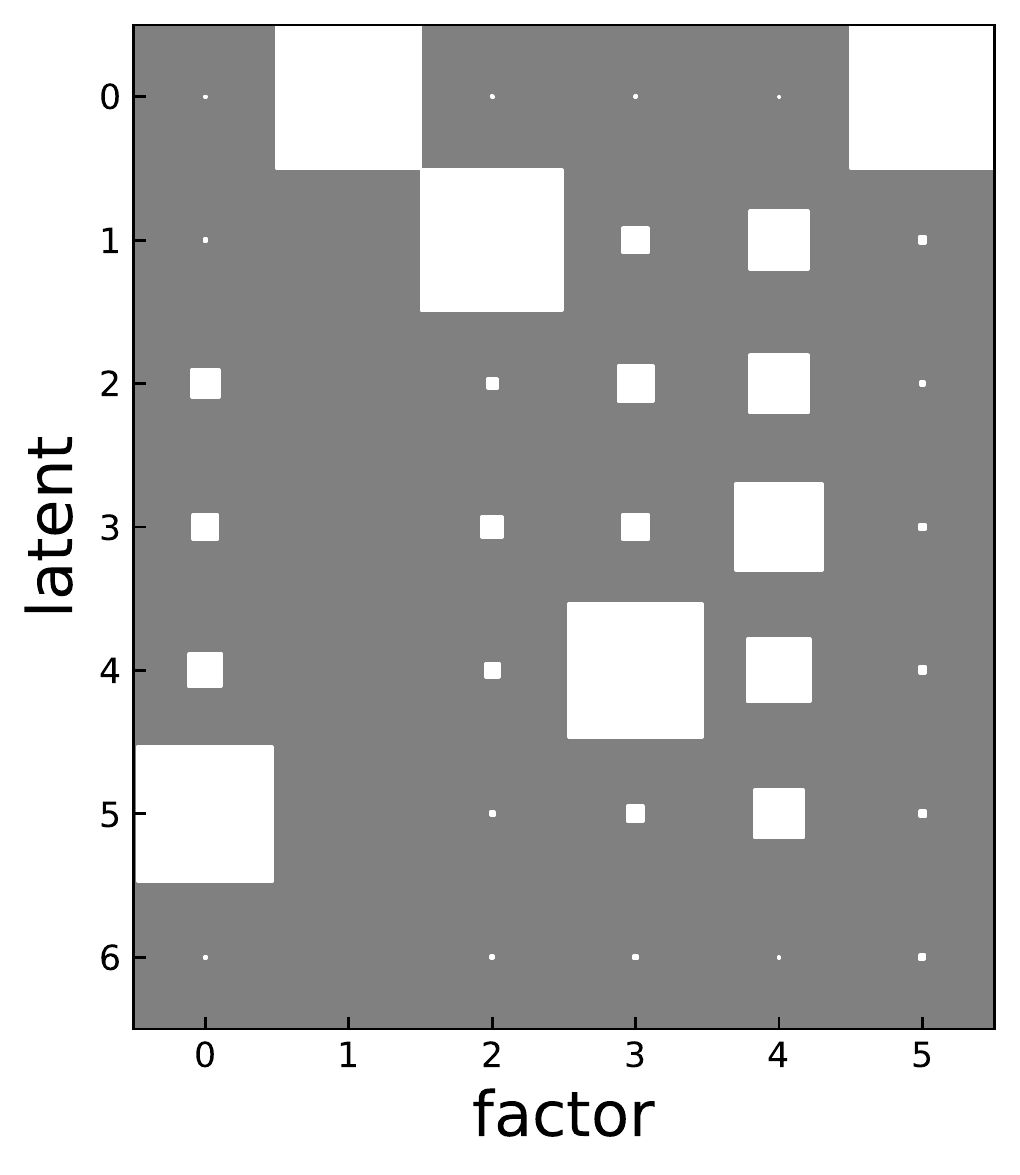}
    \includegraphics[width=0.3\textwidth]{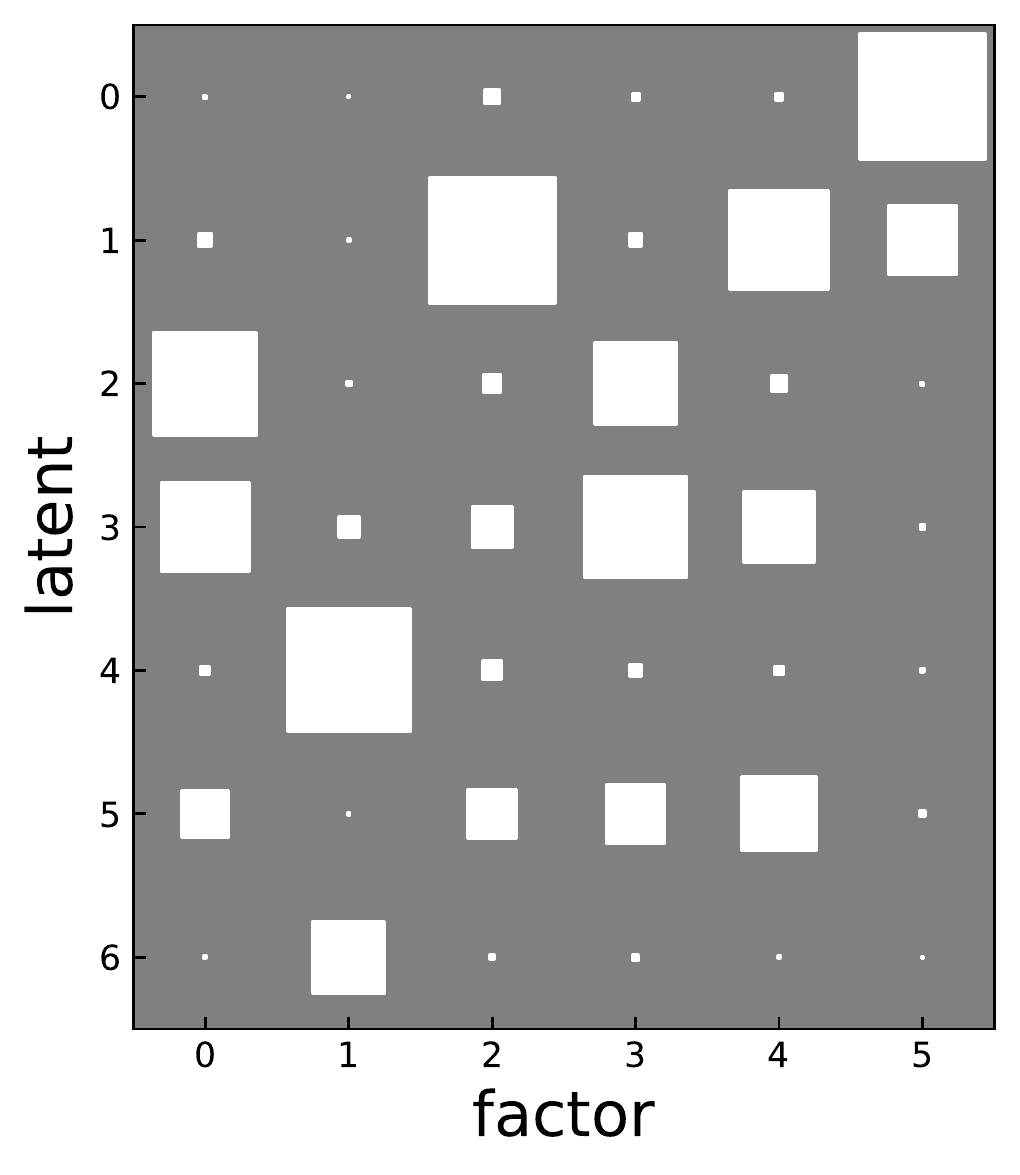}
    \includegraphics[width=0.3\textwidth]{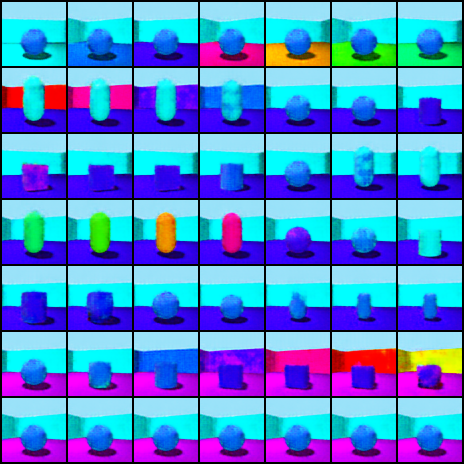}
    \includegraphics[width=0.3\textwidth]{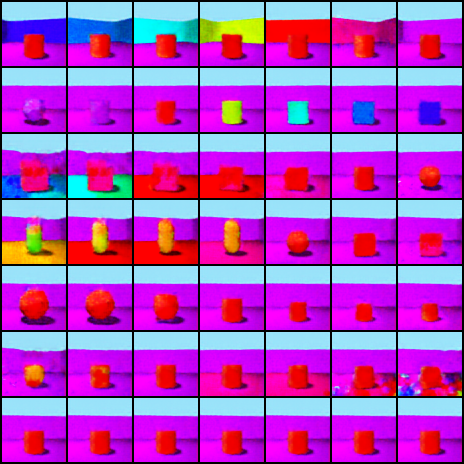}
    \includegraphics[width=0.3\textwidth]{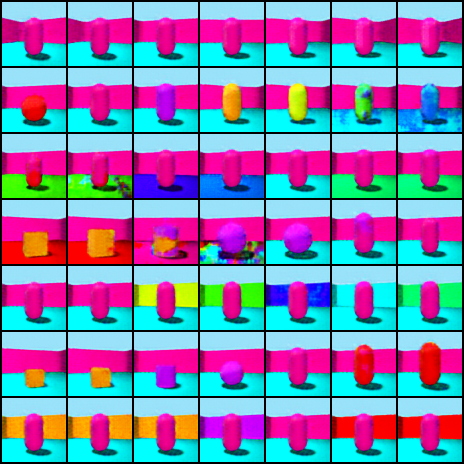}
    \caption{DCI matrix of \emph{common-3dshapes} variant for different random seeds.
    The top row shows the unique factor, the middle 5 the common (and the bottom row is the unique factors for the second view). 
    The ground truth unique generative factors are $(5)$ corresponding to the viewpoint. Our model generally correctly recovers that the factor is unique (first row in the figure), and that the other factors are common (middle five rows).
    }
    \label{fig:shapes-view}
\end{figure*}

\begin{figure*}[]
    \centering
    \includegraphics[width=0.3\textwidth]{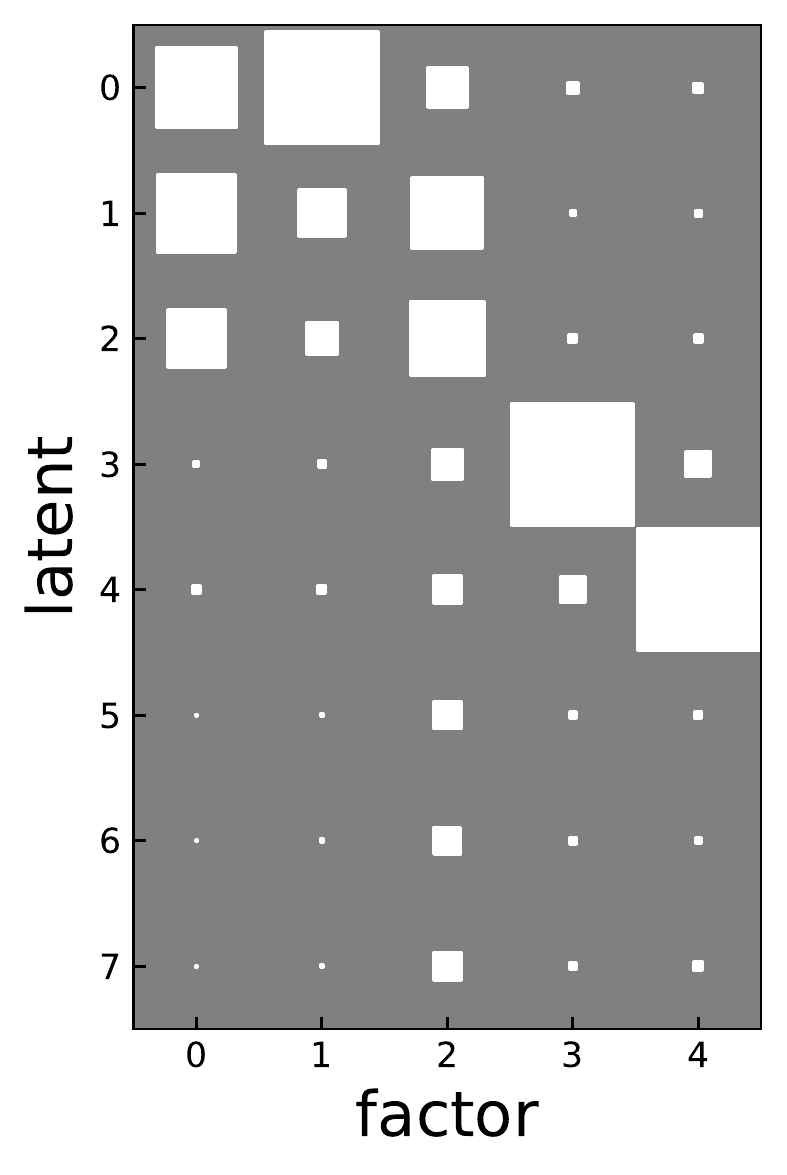}
    \includegraphics[width=0.3\textwidth]{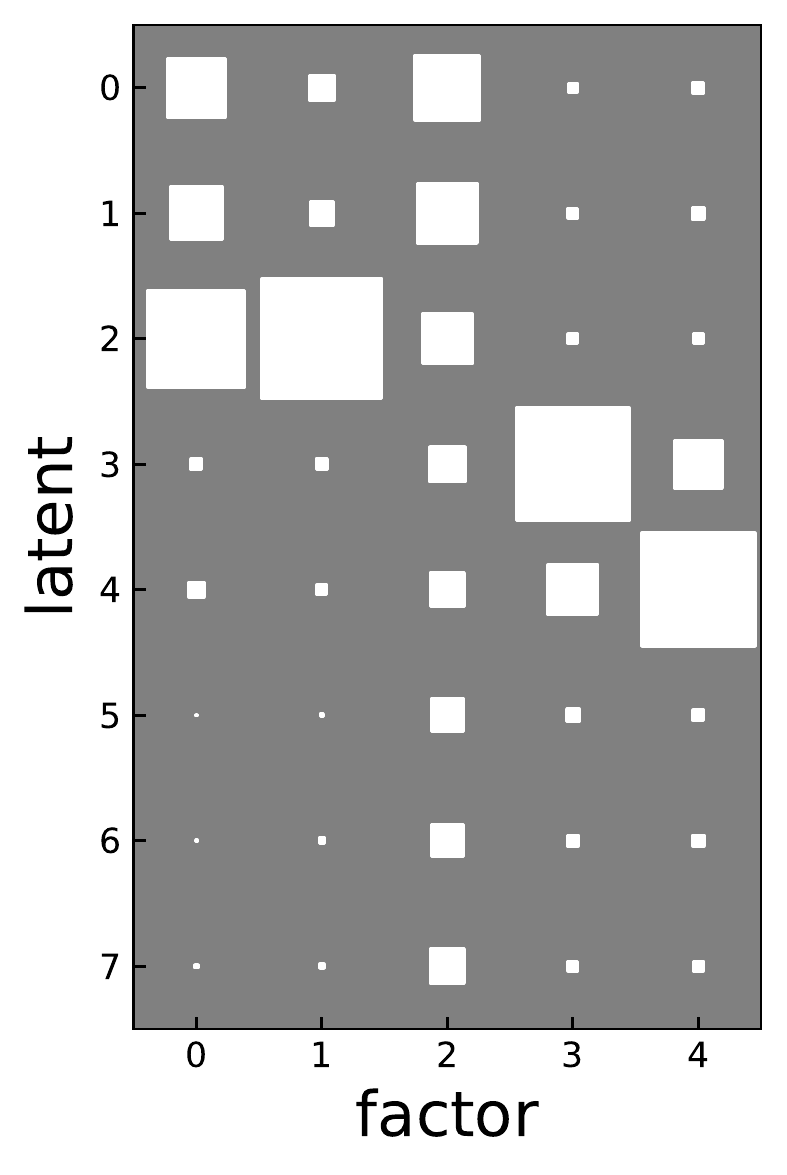}
    \includegraphics[width=0.3\textwidth]{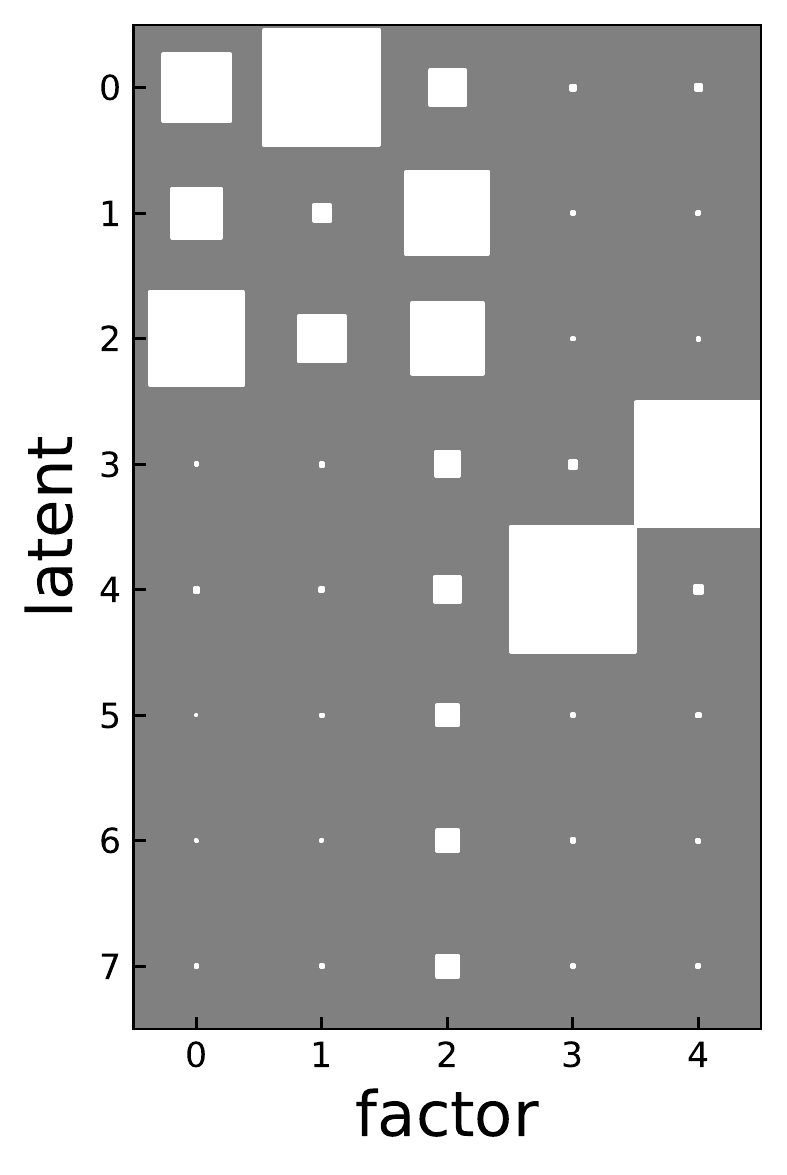}
    \includegraphics[width=0.3\textwidth]{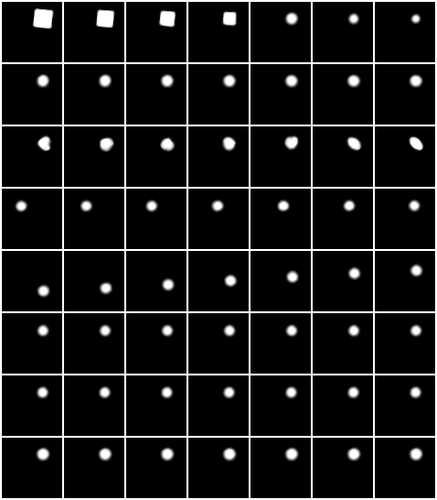}
    \includegraphics[width=0.3\textwidth]{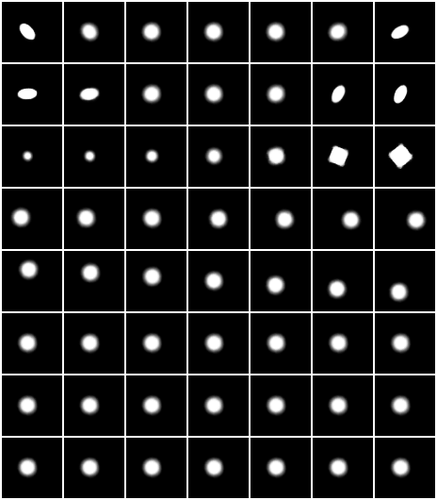}
    \includegraphics[width=0.3\textwidth]{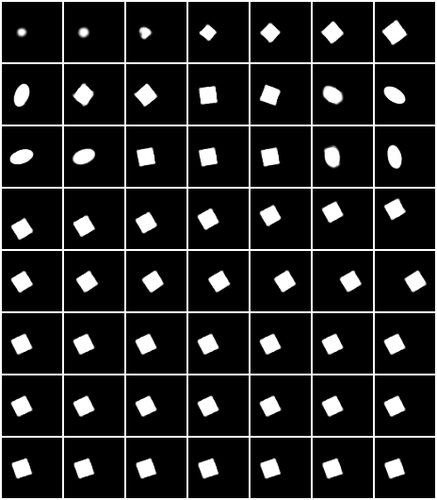}
    \caption{DCI Matrix of \emph{dsprites} experiment variant for different random seeds. 
    The top 3 rows shows the unique latent variables, the middle 2 the common (and the bottom 3 are the unique latent variables for the second view). 
    The ground truth unique generative factors are $(0,1,2)$ corresponding to shape, scale, and viewpoint angle. Our model correctly recovers that those factors are unique (first three rows in the figure), and that the other factors (x and y position) are common (middle two rows).
    }
    \label{fig:dsprite-common-pos}
\end{figure*}

\begin{figure*}[]
    \centering
    \includegraphics[width=\textwidth]{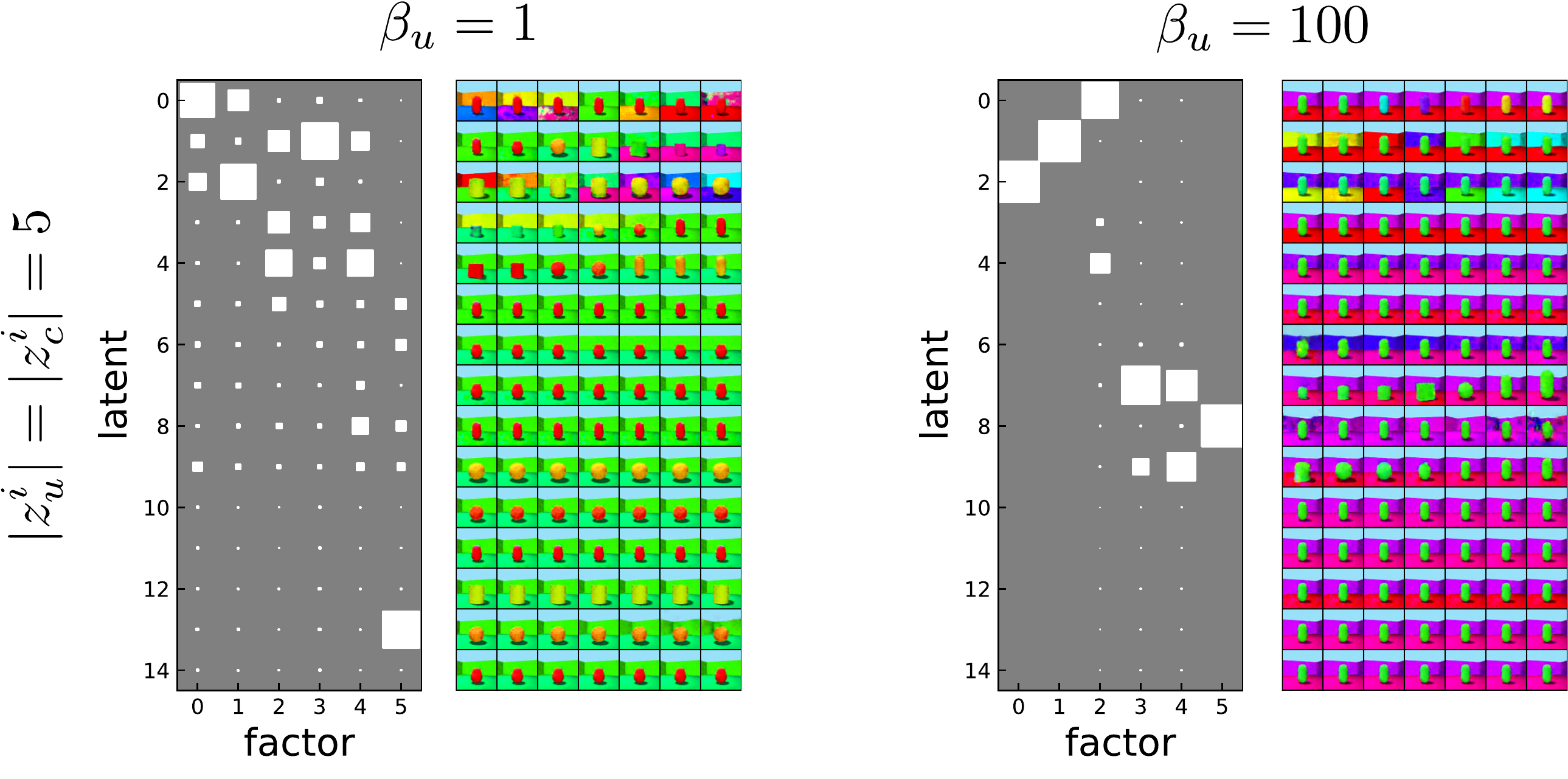}
    \caption{\textbf{Effect of $\beta_u$}. Additional runs for Fig.~\ref{fig:traversals-dci} (right) for \emph{common-3dshapes} with 5 unique and 5 common latents per view. For large $\beta_u$ (e.g. $\beta_u = 100$, right), as in the paper, we see a similar blockwise separation of the common and unique latents, even when the number of latents does not match the ground-truth number of latents.}
    \label{fig:effect-betaU}
\end{figure*}

\end{document}